\documentclass[twoside]{article}

\usepackage[accepted]{aistats2026}
\usepackage[round]{natbib}

\usepackage{hyperref}
\usepackage{url}
\usepackage{amsmath}
\usepackage{tikz}    
\usepackage{quantikz}
\usepackage{amsfonts}
\usepackage{amssymb}
\usepackage{algpseudocode}
\usepackage{comment}
\usepackage{pifont}
\usepackage{subcaption}
\usepackage{amsthm}
\usepackage{booktabs}
\usepackage{multirow}
\usepackage{algorithm}
\usepackage{booktabs}
\usepackage{subcaption}
\usepackage{tabularx}

\newcommand{\params}{\boldsymbol{\theta}}
\newtheorem{theorem}{Theorem}
\newtheorem{lemma}{Lemma}

\begin{document}

% If your paper is accepted and the title of your paper is very long,
% the style will print as headings an error message. Use the following
% command to supply a shorter title of your paper so that it can be
% used as headings.
%
%\runningtitle{I use this title instead because the last one was very long}
\runningtitle{WSBD: Freezing-Based Optimizer for Quantum Neural Networks}
\runningauthor{Kverne, Akewar, Huo, Patel, Bhimani}

% If your paper is accepted and the number of authors is large, the
% style will print as headings an error message. Use the following
% command to supply a shorter version of the author names so that
% they can be used as headings (for example, use only the surnames)
%
%\runningauthor{Surname 1, Surname 2, Surname 3, ...., Surname n}

\twocolumn[

\aistatstitle{WSBD: Freezing-Based Optimizer for \\ Quantum Neural Networks}

% \aistatsauthor{ 
%   Christopher Kverne\textsuperscript{$\dagger$} \And 
%   Mayur Akewar\textsuperscript{$\dagger$} \And 
%   Yuqian Huo\textsuperscript{$\ddagger$} \AND 
%   Tirthak Patel\textsuperscript{$\ddagger$} \And 
%   Janki Bhimani\textsuperscript{$\dagger$} 
% }

% \aistatsauthor{
%   \begin{tabular}{c}
%     Christopher Kverne\textsuperscript{$\dagger$} \hspace{3em} Mayur Akewar\textsuperscript{$\dagger$} \hspace{3em} Yuqian Huo\textsuperscript{$\ddagger$} \\[0ex] 
%     Tirthak Patel\textsuperscript{$\ddagger$} \hspace{3em} Janki Bhimani\textsuperscript{$\dagger$}
%   \end{tabular}
% }

% \aistatsaddress{ 
%   \textsuperscript{$\dagger$}Florida International University \And 
%   \textsuperscript{$\ddagger$}Rice University 
% }

\aistatsauthor{
  Christopher Kverne \\ \textnormal{Florida International University} \And
  Mayur Akewar \\ \textnormal{Florida International University} \AND
  Yuqian Huo \\ \textnormal{Rice University} \And
  Tirthak Patel \\ \textnormal{Rice University} \And
  Janki Bhimani \\ \textnormal{Florida International University}
}

% Keep this command, but leave it empty
\aistatsaddress{}
]

\begin{abstract}
The training of Quantum Neural Networks (QNNs) is hindered by the high computational cost of gradient estimation and the barren plateau problem, where optimization landscapes become intractably flat. To address these challenges, we introduce Weighted Stochastic Block Descent (WSBD), a novel optimizer with a dynamic, parameter-wise freezing strategy. WSBD intelligently focuses computational resources by identifying and temporarily freezing less influential parameters based on a gradient-derived importance score. This approach significantly reduces the number of forward passes required per training step and helps navigate the optimization landscape more effectively. Unlike pruning or layer-wise freezing, WSBD maintains full expressive capacity while adapting throughout training. Our extensive evaluation shows that WSBD converges on average 63.9\% faster than Adam for the popular ground-state-energy problem, an advantage that grows with QNN size. We provide a formal convergence proof for WSBD and show that parameter-wise freezing outperforms traditional layer-wise approaches in QNNs. Project page: \url{https://github.com/Damrl-lab/WSBD-Stochastic-Freezing-Optimizer}. %We also provide a formal convergence proof, directly linking stochastic parameter freezing with guaranteed descent and showing that parameter-wise freezing is more effective than traditional layer-wise approaches for QNNs. %53.7\% faster on average than standard gradient-based optimizers for the MNIST and parity problem, and 62.3\% faster than Adam for the ground-state-energy problem an advantage that grows with QNN size. We provide a formal proof of convergence and show that parameter-wise freezing is more effective than traditional layer-wise approaches for QNNs.
\end{abstract}

\section{INTRODUCTION}
The promise of quantum neural networks (QNNs) \citep{abbas2021power} is currently shackled by a fundamental roadblock: the cost and impracticality of training them. This challenge is twofold. First, the primary method for gradient evaluation, the Parameter-Shift Rule (PSR) \citep{crooks2019gradients, wierichs2022general}, demands at least two full circuit evaluations for every single parameter at every training step. This creates a crippling computational bottleneck that scales linearly with the model size, making gradient-based optimization prohibitively expensive for large-scale QNNs. Adding to this issue is the notorious barren plateau problem, where optimization landscapes flatten exponentially with qubit count, causing gradients to vanish and stall training progress \citep{mcclean2018barren, larocca2025barren}. Trainers are thus caught in a double bind: the gradients that are essential for optimization are both incredibly expensive to compute and often non-impactful when obtained. To reduce the computational burden of training, parameter freezing is a well-established strategy in classical machine learning. However, classical freezing methods, which are often static or operate at a coarse, layer-wise level for transfer learning, are fundamentally unsuited for the unique optimization landscape of QNNs. The highly entangled nature of quantum circuits means a parameter's influence is not neatly confined to a layer, and its importance can change dramatically throughout training. Therefore, a new approach is required, one that is granular, dynamic, and designed specifically to accelerate QNN training from scratch. To address this, we introduce Weighted Stochastic Block Descent (WSBD), a novel optimization algorithm that employs a dynamic, parameter-wise freezing strategy. By intelligently and temporarily allocating computational resources, WSBD directly confronts the high cost of gradient estimation in a manner tailored for the quantum domain. Our major contributions are:

\textbf{A Novel QNN-Specific Freezing Optimizer:} We propose WSBD, which uses a gradient-derived importance score to temporarily freeze the least influential parameters during training. Its design overcomes the limitations of classical methods by offering a dynamic, stochastic, and parameter-wise approach that preserves the model's full expressive capacity while significantly reducing the number of required circuit evaluations. \textbf{Scalable Efficiency:} Our extensive experiments show WSBD is significantly more efficient, saving hundreds of thousands of circuit evaluations compared to optimizers like Adam. We demonstrate that this advantage scales with QNN size, highlighting WSBD's effectiveness for larger, more computationally demanding models. \textbf{Theoretical Guarantees and Critical Insights:} We provide a formal proof of convergence for WSBD, establishing its theoretical soundness. Furthermore, our ablation studies provide direct evidence for our central claim, revealing a critical insight for the field: a granular, parameter-wise freezing strategy is more effective than traditional layer-wise approaches for QNNs. %is on average 47.9\% more effective than traditional layer-wise approaches for QNNs.

% \textbf{Source code for WSBD can be found on GitHub} (\url{https://github.com/Damrl-lab/WSBD-Stochastic-Freezing-Optimizer}).

%\subsection{Strengthen scalable advantage}
%Clarify shots vs FP and advantage over ADAM. Highlight that 10 qubits is larger than thought. WSBD ADAM - ADAM, WSBD SGD SGD.
%Future work do for more problems and more different QNN designs
\section{BACKGROUND AND RELATED WORK}
\label{sec:related}
\subsection{QNNs and Key Training Obstacles}

A QNN operates on the $n$-qubit ground state, $\ket{0}^{\otimes n}$ in an exponentially large Hilbert space $\mathcal{H}$. For a given input, $\hat{x_i} \in \mathbb{R}^m$, an encoding circuit, $A(\hat{x_i})$, applies a sequence of quantum gates whose parameters are functions of $\hat{x_i}$ mapping the classical input to a quantum state. Following this, a variational circuit or ansatz, $U(\params)$, applies a sequence of gates organized into L layers. The total unitary transformation of the ansatz is the product of the operation of each layer:
\begin{equation}
U(\params)  = \prod_{l=L}^{1} U_l(\boldsymbol{\theta}_l)W_l
\end{equation}
where $U_l(\params_l)$ represents a set of parametrized gates and $W_l$ represents a set of constant  gates (e.g., CNOT, CZ, H, etc.) in a given layer $l$. The final state is given by $\ket{\psi(\params, \hat{x_i})} = U(\params)A(\hat{x_i})\ket{0}^{\otimes n}$. To extract a classical output % The goal of training is to find the optimal parameters $\params^*$ that minimize an objective function $\mathcal{C}(\params)$. This 
we must measure the system through a Hermitian observable, $M$, on the final state, which is the expectation value of the observable:
\begin{equation}
f(\params, \hat{x_i}) = \langle \psi(\params, \hat{x_i}) | M | \psi(\params, \hat{x_i}) \rangle %= \langle \bra{0}^{\otimes n} A(x)^\dagger U(\params)^\dagger M U(\params) A(x) \ket{0}^{\otimes n}
\end{equation}
Now we can define an objective or cost function $\mathcal{C}(\params, \hat{x_i})$ which aims to quantify the difference of the label $\hat{y}_{true}$ and prediction $f(\params, \hat{x_i})$. A classical optimizer then iteratively updates $\params$ to minimize the cost. However, this process faces two significant hurdles: \textbf{(1) High Cost of Gradient Estimation: } As direct differentiation is infeasible, %due to the exponentially many terms in the Hilbert space $(\mathbb{C}^2)^{\otimes n}$; rather, 
gradients are computed using the Parameter-Shift Rule (PSR) \citep{crooks2019gradients}. The gradient for a parameter $\theta_k$ is calculated as:
\begin{equation}
\label{eq:param_shift_general}
\frac{\partial \mathcal{C}(\params, \hat{x_i})}{\partial \theta_k} = \varsigma_P[\mathcal{C}(\params + \frac{\pi}{4\varsigma_P}\mathbf{\vec{e}_k}, \hat{x_i}) - \mathcal{C}(\params - \frac{\pi}{4\varsigma_P}\mathbf{\vec{e}_k}, \hat{x_i})]
\end{equation}
Here $\varsigma_P$ represents a constant (for the Pauli-matrices, $\varsigma_P$ = $\frac{1}{2}$), $\mathbf{\vec{e}_k}$ is the $k$-th standard basis vector (a vector with 1 in the $k$-th position and 0s elsewhere, meaning the shift is applied only to $\theta_k$ for each optimization step). Although exact, this means that computing $|\params|$ gradients requires $2|\params| + 1$ total evaluations, one to evaluate the cost, $\mathcal{C}(\params, \hat{x_i})$, and $2|\params|$ to evaluate all shifts ($\theta_k \pm \frac{\pi}{2} \text{ } \forall \theta_k \in \params$), making gradient-based optimization expensive for large-scale QNNs. \textbf{(2) Barren Plateaus:} For many QNN architectures, particularly deep or wide ones, the optimization landscape becomes flat \citep{mcclean2018barren}. This is formally expressed as a "barren plateau," where the variance of the cost function's gradient vanishes exponentially with the number of qubits, $n$:
\begin{equation}
\text{Var}\left[\frac{\partial \mathcal{C}(\params, \hat{x_i})}{\partial \theta_k}\right] \propto \mathcal{O}\left(\frac{1}{\text{exp}(n)}\right) \quad %\text{or} \propto \mathcal{O}\left(\frac{1}{\text{poly}(n)}\right)
\end{equation}
This means gradients become exponentially small, providing no meaningful direction for optimization and effectively stalling the training process. \textit{These combined challenges make scaling QNNs a formidable task, necessitating new strategies to make training more efficient and effective.}

\begin{comment}
\begin{figure}[h]
\centering
\vspace{-2mm}
\begin{quantikz}[row sep=0.15cm, column sep=0.3cm]
\lstick{\ket{0}} &
\gate[4, style={cyan!30, draw=black, thick}]{\parbox{1.2cm}{\centering \Large $A(\vec{x})$ \\ \small Encoding}} & 
\gate[4, style={orange!40, draw=black, thick}]{\parbox{2.5cm}{\centering\Large U($\params$) \\ \small Variational Layers}} &
\meter{} \\
\lstick{\ket{0}} & & & \meter{} \\
\setwiretype{n} \vdots & & & \vdots \\
\lstick{\ket{0}} & & & \meter{}
\end{quantikz}
\vspace{-2mm}
\caption{QNN Structure Consisting of Data Encoding, Variational Layers, and Measurement.}
\label{fig:qnn_struc}
\vspace{-5mm}
\end{figure}
\end{comment}

\subsection{Strategies for Accelerating QNN Training}

Several research directions have aimed to mitigate the high cost of QNN training through adaptive ansatz methods and circuit optimization \citep{huo2025revisiting}. One popular approach is pruning and compression strategies which seeks to simplify the QNN by permanently removing non-influential parameters. This method, similar to freezing, reduces the number of forward passes required by 2 for each parameter removed. Frameworks like \citep{qnn_compression} and algorithms such as \citep{qadaprune, kverne2025quantum, pect} have shown that removing a fraction of parameters can speed up training and improve performance. Similarly, some work has explored gradient compression techniques \citep{alistarh2017qsgd, lin2017deep}, however this fails to address the core computational bottleneck of gradient estimation itself. A more dynamic strategy is parameter freezing, which is the focus of this work. Unlike pruning, freezing does not change the circuit architecture; instead, it temporarily deactivates the training of selected parameters, concentrating computational effort on a smaller, active subset. Prior work on quantum parameter freezing is limited. Layer-wise freezing has been employed to incrementally grow QNNs \citep{layerwiselearning}, and dropout-inspired methods have been proposed to improve generalization \citep{qnn_drop}. However, these approaches do not aim to accelerate training from scratch, nor do they leverage fine-grained, adaptive control over which parameters are frozen during training. In contrast, freezing strategies are well-established in classical machine learning. They are central to transfer learning \citep{hu2022lora, yosinski2014transferable, Kornblith2018DoBI, he2019rethinking} and have also been used to reduce training time by avoiding backpropagation through certain layers \citep{brock2017freezeout}. However, the idea of selectively freezing parameters in QNNs, from initialization and throughout training, as a way to improve training efficiency remains unexplored. \textit{To address this gap, we draw inspiration from classical ML to introduce a method that accelerates QNN training without permanently altering the circuit architecture. We propose a dynamic, parameter-wise freezing strategy that intelligently adapts throughout the training process by concentrating computational resources on the most impactful parameters at each learning stage.} 
\section{WEIGHTED STOCHASTIC BLOCK DESCENT}
\label{sec:optim}

\begin{algorithm*}
\caption{Weighted Stochastic Block Descent (WSBD)}
\label{alg:wsbd}
\begin{algorithmic}[1]
\State \textbf{Input:} Initial parameters $\params^{(0)}$, Freeze Threshold $\lambda_f \in [0,1)$, Learning Rate $\eta > 0$, Training Window $\tau \in \mathbb{N}_+$
\State \textbf{Initialize:} Importance scores $\mathcal{I}_p(\theta_k) \gets 0$ for all $k=1, \dots, |\params|$, Active set $\mathbb{A} \gets \params$
\While{not converged}
    \For{$t = 1, \dots, \tau$} \Comment{Iterate within the training window}
        \State Sample data instance $(\hat{x}_t, \hat{y}_t)$
        \State Initialize gradient vector $g_t \gets \mathbf{0} \in \mathbb{R}^{|\params|}$
        \For{each parameter $\theta_k \in \mathbb{A}$} \Comment{Compute gradients only for active parameters}
            \State $g_{t,k} \gets \varsigma_P[\mathcal{C}(\params + \frac{\pi}{4\varsigma_P}\mathbf{\vec{e}_k}, \hat{x}_t) - \mathcal{C}(\params - \frac{\pi}{4\varsigma_P}\mathbf{\vec{e}_k}, \hat{x}_t)]$
        \EndFor
        \State $\params^{(t+1)} \gets \params^{(t)} - \eta \cdot g_t$ \Comment{Update parameters with any optimizer (frozen parameter grads are 0)}
        \For{each parameter $\theta_k \in \mathbb{A}$} \Comment{Update importance scores for active parameters}
            \State $\mathcal{I}_p^{(t+1)}(\theta_k) \gets \mathcal{I}_p^{(t)}(\theta_k) + g_{t,k}$ \Comment{Using Sum of Gradients metric}
        \EndFor
    \EndFor
    
    \State \Comment{After window, stochastically select the new active set}
    \State Define selection probability $P = [p_1, \cdots, p_k, \cdots, p_{|\params|}]$: $p_k \leftarrow \frac{|\mathcal{I}_p(\theta_k)| + \epsilon}{\sum_{i=1}^{|\params|} (|\mathcal{I}_p(\theta_i)| + \epsilon)} $ for all $k$ %\Comment{$\epsilon$ ensures $p_k>0$}
    \State $N_{active} \gets \lceil (1 - \lambda_f) \cdot |\params| \rceil$ \Comment{Calculate size of the new active set}
    \State $\mathbb{A} \gets \text{Sample}(N_{active}, \params, P)$ \Comment{Sample $N_{active}$ params using weights $P$}
    
    \For{each parameter $\theta_k \in \mathbb{A}$} \Comment{Reset scores for newly active parameters}
        \State $\mathcal{I}_p(\theta_k) \gets 0$
    \EndFor
\EndWhile
\State \textbf{Return:} Optimized parameters $\params$
\end{algorithmic}
\end{algorithm*}

In this section, we introduce our freezing-based QNN-specific optimizer, \textbf{Weighted Stochastic Block Descent (WSBD)}. The key idea is to intelligently focus computational resources on subsets of parameters that are most influential at different stages of the optimization process. Our optimizer is built upon the PSR (Eq. \ref{eq:param_shift_general}) and can be adapted to \textbf{work with any classical-based gradient optimizer} such as SGD, Adam, etc. At the core of WSBD is an importance score, $\mathcal{I}_p$, which quantifies the influence of each parameter $\theta_k$ over a given training window $\tau$. While several metrics for this score are possible, we selected the Sum of Gradients which accumulates the gradients of each parameter in a given window:

\begin{equation}
    \mathcal{I}_p(\theta_k) = \sum_{t=1}^\tau \frac{\partial \mathcal{C}(\params^{(t)}, \hat{x_t})}{\partial \theta_k}
\end{equation}

Our empirical analysis, detailed in Appendix \ref{app:imp_scores}, confirms this metric provides the best balance of performance and computational efficiency. It robustly identifies parameters with a consistent impact, unlike variance-based scores which can be misleading in barren plateaus, and avoids the significant overhead of second-order methods. After computing all importance scores we take its absolute value and add a small constant $\epsilon = 10^{-8}$ such that $\mathcal{I}_p(\theta_k) = |\mathcal{I}_p(\theta_k)| + \epsilon$, ensuring each importance score is greater than zero which will be crucial for proving convergence for WSBD (Appendix \ref{sec:proof}). The WSBD optimizer is summarized in Algorithm \ref{alg:wsbd}. The algorithm proceeds in windows of size $\tau$. Within each window (lines 4-14), it performs the following steps for each training instance: It starts by computing the gradients for the parameters in the active set $\mathbb{A}$ using PSR (lines 7-9). It then updates the active parameters using a classical optimizer (e.g., SGD, Adam, etc.) (line 10). Finally it accumulates the gradients for each active parameter to update their respective importance scores $\mathcal{I}_p$ (lines 11-13). After $\tau$ steps, the algorithm uses the calculated importance scores to stochastically determine which parameters to freeze. A probability distribution is formed where the probability, $p_k$, of a parameter $\theta_k$
remaining active is its normalized importance score: $p_k = \frac{|\mathcal{I}_p(\theta_k)| + \epsilon}{\sum_{i=1}^{|\params|}(|\mathcal{I}_p(\theta_i)| + \epsilon)}$ (line 16). Consequently, parameters with lower importance scores are assigned a higher probability of being frozen. Based on this distribution, the algorithm stochastically selects a new set of frozen parameters, with the total size of the set determined by the freeze percentile $\lambda_f$ (line 18). This stochastic selection, in contrast to deterministically freezing the parameters with the lowest scores, promotes greater exploration of the optimization landscape by preventing permanent freezing. 

A crucial feature of WSBD is how it manages the importance scores across freezing cycles. When a parameter becomes part of the new active set, its importance score is reset to zero (lines 19-21).  However, the scores of parameters that remain frozen are cached (i.e., preserved). This reset mechanism is critical for the algorithm's adaptiveness, as it enables rapid assessment of newly activated parameters' current contributions while preventing legacy bias from parameters that were important early in training but have since become less influential. Our ablation studies confirm this empirically in Sec.~\ref{sec:eval}, showing that the reset and stochastic freezing mechanisms are more effective. The classical overhead of our optimizer is linear, scaling as $O(|\params|)$ with the number of parameters for updating importance scores and executing the stochastic freeze. This is insignificant when compared to the substantial cost of the quantum circuit evaluations required for gradient estimation which is of order $O(\mathcal{M}2|\params|)$ where$ \mathcal{M}$ represents the number of times each forward pass is run on quantum hardware (commonly called a shot or circuit evaluation) which is needed as the state collapse will only give you a single output opposed to the entire output distribution needed to evaluate the cost. The higher value for $\mathcal{M}$ the more accurate estimate of $f(\params, \hat{x_i})$ is given (commonly $\mathcal{M} \sim 1000$). We optimized the freeze threshold $\lambda_f$ and training window $\tau$ via grid search, finding the best performance with $\lambda_f$=70\% and $\tau$=100. The full tuning process is detailed in Appendix \ref{app:imp_scores}.
\section{THEORETICAL CONVERGENCE FRAMEWORK} \label{sec:theory}

We extend the theoretical guarantees of WSBD to a broad class of optimizers $\mathcal{O}$. WSBD acts as a stochastic coordinate mask, scaling the expected descent direction without amplifying variance. 

\textbf{Generalized Update Rule.} Let $\mathcal{O}$ generate an update vector $u_t$ based on $\nabla\mathcal{C}^{(t)}$. The WSBD-augmented update is:
\begin{equation}
    \theta^{(t+1)} = \theta^{(t)} - \eta_t (\delta^{(t)} \odot u_t)
\end{equation}

\textbf{Assumptions.} We assume: 
(1) $\mathcal{C}$ is $L$-smooth (proven for QNNs in Appendix \ref{sec:proof}, Theorem \ref{thm:elem_bound}) and bounded below by $\mathcal{C}^*$; 
(2) $\mathcal{O}$ produces a valid descent direction: $\mathbb{E}[\langle \nabla\mathcal{C}^{(t)}, u_t \rangle \mid \mathcal{F}_t] \ge c_1 \|\nabla\mathcal{C}^{(t)}\|^2$; 
(3) The update magnitude is bounded: $\mathbb{E}[\|u_t\|^2 \mid \mathcal{F}_t] \le K$; and 
(4) The mask $\delta^{(t)}$ is independent of $u_t$ with minimum selection probability $p_{\min} > 0$ which is ensured by $\epsilon$.

\begin{proof}
By $L$-smoothness, the objective decreases as:
\begin{equation*}
\mathcal{C}^{(t+1)} \le \mathcal{C}^{(t)} - \eta_t \langle \nabla\mathcal{C}^{(t)}, \delta^{(t)} \odot u_t \rangle + \frac{L\eta_t^2}{2} \|\delta^{(t)} \odot u_t\|^2
\end{equation*}

Taking expectations conditioned on $\mathcal{F}_t$, we utilize the independence of the mask $\delta^{(t)}$ and the descent assumption to bound the first-order term by $- \eta_t p_{\min} c_1 \mathbb{E}[\|\nabla\mathcal{C}^{(t)}\|^2]$. The second-order term is a contraction, bounded by $\frac{L\eta_t^2}{2}K$. Summing these expectations over $T$ steps:

\begin{multline*}
p_{\min} c_1 \sum_{t=0}^{T-1} \eta_t \mathbb{E}[\|\nabla\mathcal{C}^{(t)}\|^2] \le 
\mathcal{C}^{(0)} - \mathcal{C}^* + \frac{LK}{2} \sum_{t=0}^{T-1} \eta_t^2
\end{multline*}

Given standard step-size conditions ($\sum \eta_t = \infty, \sum \eta_t^2 < \infty$), the RHS remains finite while the LHS accumulates. This implies $\liminf_{t\to\infty} \mathbb{E}[\|\nabla\mathcal{C}^{(t)}\|^2] = 0$. (See Appendix \ref{sec:proof} for full derivations including SGD and AMSGrad variants). This means WSBD will not change the convergence guarantees of other optimizers as the active parameter set changes throughout.
\end{proof}
\section{EXPERIMENTAL SETUP}
\label{sec:setup}
We benchmark the performance of our proposed WSBD optimizer across three representative tasks: MNIST classification, the parity problem, and a Variational Quantum Eigensolver (VQE) ground-state problem. Our goal is to demonstrate that WSBD integrates with classical optimizers to improve convergence efficiency. To this end, we compare it against a suite of standard optimizers (summarized in Table~\ref{tab:optimizers}) using the QNN architecture shown in Fig.~\ref{fig:pqc}. To assess both near-term practicality and future potential, we conduct all experiments in two settings: an idealized, noise-free simulation and a realistic noisy environment that emulates current hardware by incorporating decoherence, depolarizing noise, and readout errors calibrated from the IBM Heron quantum device~\citep{IBMQuantum2025}.
%In this section we describe how we evaluate the performance of our proposed WSBD optimizer. We compare it against a wide range of existing optimizers on three representative tasks, namely classification with the MNIST dataset, the parity problem, and the ground state VQE problem. The variational quantum architecture used in the experiments is shown in Fig.~\ref{fig:pqc}. A summary of the optimizers and their main characteristics is provided in Table~\ref{tab:optimizers}. The goal is to show that WSBD can be plugged into any classical optimizer and improve convergence efficiency. We evaluate WSBD under both idealized environments (noise-free simulations representing a perfect quantum computer) and noisy environments (realistic conditions emulating current quantum hardware). In the noisy setting, each gate, idle time, and measurement is subject to decoherence, depolarizing noise, and readout errors calibrated from IBM Heron device parameters \cite{IBMQuantum2025}. Comparing noisy and ideal environments provides insight into both the current practicality of WSBD on near-term hardware and its projected performance on future fault-tolerant devices.

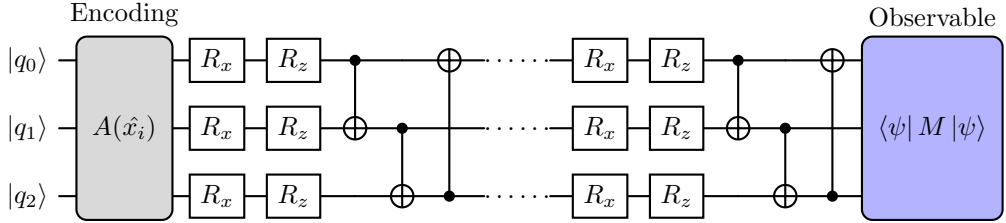
\begin{figure*}[!htbp]
\centering
\begin{quantikz}[row sep=0.3cm, column sep=0.3cm]
\lstick{$|q_0\rangle$} & \gate[3, style={rounded corners,fill=gray!30, inner xsep=2pt, label=above:Encoding}]{A(\hat{x_i})} & \gate{R_x} & \gate{R_z} & \ctrl{1} & & \targ{} &  \hdots \hdots & \gate{R_x} & \gate{R_z} & \ctrl{1} & & \targ{} & \gate[3, style={rounded corners,draw=black,fill=blue!30, inner xsep=2pt, label=above:Observable}]{\bra{\psi} M \ket{\psi}} \\
\lstick{$|q_1\rangle$} & & \gate{R_x} & \gate{R_z} & \targ{} & \ctrl{1} & &  \hdots \hdots & \gate{R_x} & \gate{R_z} & \targ{} & \ctrl{1} & & \\
\lstick{$|q_2\rangle$} & & \gate{R_x} & \gate{R_z} & & \targ{} & \ctrl{-2} &  \hdots \hdots & \gate{R_x} & \gate{R_z} & & \targ{} & \ctrl{-2}  &
\end{quantikz}
\caption{VQA architecture used in this study with data encoding, variational layers, and measurement. The circuit begins with an encoding block $A(\hat{x_i})$ that maps classical data to quantum states, followed by alternating parametrized rotations and entangling CNOT gates. The final measurement observable $\bra{\psi} M \ket{\psi}$ extracts the computational result.}
\label{fig:pqc}
\end{figure*}

\begin{table*}[!hbt]
    \centering
    \caption{Comparison of optimizer characteristics. $|\params|$ is the total number of parameters, and $\mathbb{A}$ is the set of active (non-frozen) parameters, where $|\mathbb{A}| \leq |\params|$. $\mathcal{M}$ represents the number of shots (circuit evaluations) each forward pass is run on the circuit, where a higher value for $\mathcal{M}$ gives more precise outputs.}
    \label{tab:optimizers}
    
    \begin{tabular}{l c c c c c l}
        \toprule
        & & \multicolumn{2}{c}{\textbf{Selection}} & \multicolumn{2}{c}{\textbf{Freezing}} & \textbf{Circuit Evals} \\
        \cmidrule(lr){3-4} \cmidrule(lr){5-6}
        \textbf{Optimizer} & \textbf{Gradient} & \textbf{Stochastic} & \textbf{Det.} & \textbf{Layer} & \textbf{Parameter} & \textbf{Per Step}\\
        \midrule
        SGD & \checkmark & & & & & $\mathcal{M}(2|\params|+1)$ \\
        WSBD-SGD & \checkmark & \checkmark & & & \checkmark & $\mathcal{M}(2|\mathbb{A}|+1)$ \\
        DBD-SGD & \checkmark & & \checkmark & & \checkmark & $\mathcal{M}(2|\mathbb{A}|+1)$ \\
        SBD-SGD & \checkmark & \checkmark & & & \checkmark & $\mathcal{M}(2|\mathbb{A}|+1)$ \\
        L-WSBD-SGD & \checkmark & \checkmark & & \checkmark & & $\mathcal{M}(2|\mathbb{A}|+1)$ \\
        WSBD-NO-RESET & \checkmark & \checkmark & & & \checkmark & $\mathcal{M}(2|\mathbb{A}|+1)$ \\
        \midrule
        ADAM & \checkmark & & & & & $\mathcal{M}(2|\params|+1)$ \\
        WSBD-ADAM & \checkmark & \checkmark & & & \checkmark & $\mathcal{M}(2|\mathbb{A}|+1)$ \\
        \midrule
        SPSA & & \checkmark & & & & $2\mathcal{M}$ \\
        NELDER-MEAD & & & \checkmark & & & $\text{Evals} \in (\mathcal{M}, \mathcal{M}|\params|]$ \\
        BAYESIAN & & \checkmark & & & & $\mathcal{M}$ \\
        \bottomrule
    \end{tabular}%
    
\end{table*}

\subsection{Tasks and Datasets}
We evaluate WSBD on three tasks that represent different challenges common in QML. The first task is MNIST image classification %using the digits $\{1,2,3,4\}$ 
\citep{lecun2002gradient}, where each image is preprocessed with Principal Component Analysis to match the number of qubits and is then encoded into the quantum state space using angle encoding. For a reduced feature vector $\hat{x_i}$ the operator $A(\hat{x_i})$ applies single-qubit rotations: $A(\hat{x_i})\ket{0}^{\otimes n} = \left( \bigotimes_{i=1}^{n} R_X(\hat{x_i}) \right) \ket{0}^{\otimes n}.$ The second task is the parity problem. For a binary input string $S = \langle x_1 x_2 \dots x_n \rangle$ with $x_k \in \{0, 1\}$ the parity function is $P(S) = (\sum_{k=1}^n x_k) \text{ mod } 2$. The encoding operator $A(\hat{x_i})_{Parity}$ applies a Pauli-X gate to the $k$-th qubit if $x_k = 1$. The resulting initial state is: $A(\hat{x_i})_{Parity}\ket{0}^{\otimes n} = \left( \bigotimes_{k=1}^{n} X^{x_k} \right) \ket{0}^{\otimes n}.$

%These two tasks provide complementary benchmarks: MNIST is a data-driven classification problem while the parity task is a computational function problem. Together they test the versatility of WSBD across different optimization landscapes. 

The third task evaluates WSBD in a noisy, hardware-realistic setting through a Variational Quantum Eigensolver (VQE). VQE aims to approximate the ground state energy of a given Hamiltonian $H$ by preparing a parametrized quantum state $|\psi(\params)\rangle$ and minimizing the energy expectation value $E(\params) = \bra{\psi(\params)}M\ket{\psi(\params)}$. Here we study the one-dimensional transverse-field Ising model (TFIM) Hamiltonian, defined for $n$ qubits as: $H_{TFIM} = -J \sum_{i=1}^{n-1}Z_iZ_{i+1}-h\sum_{i=1}^nX_i$. This Hamiltonian is widely used as a benchmark in VQE studies because it is non-trivial yet efficiently simulatable, making it ideal for analyzing optimizer performance under noise \citep{peruzzo2014variational, kandala2017hardware}. Unlike MNIST and parity, the VQE problem does not require an explicit data encoding stage; the ansatz circuit directly parameterizes candidate ground states. The cost function is simply the expectation value of $H_{\text{TFIM}}$, and optimization seeks to minimize this energy. Because each parameter-shift evaluation involves stochastic measurement outcomes and is further distorted by decoherence and gate errors, this task provides a stringent test of WSBD under realistic noise conditions. We simulate noise using calibration data from the IBM Heron backend, including amplitude damping ($T_1$), phase damping ($T_2$), depolarizing noise on one- and two-qubit gates, idle errors, and readout assignment errors. This models the primary error channels of near-term superconducting hardware \citep{IBMQuantum2025}.

\subsection{Experimental Design}
\label{sec:exp_design}
We conduct experiments on quantum neural networks with varying architectures, including a 4-qubit 2-layer model, an 8-qubit 3-layer model, and a 10-qubit 5-layer model. These architectures represent challenging, large-scale benchmarks by current quantum standards, chosen specifically to demonstrate our optimizer's advantage where training costs become prohibitive. For MNIST we use an ansatz with $R_X$ and $R_Z$ single-qubit rotations and cyclic entanglement, while for the parity problem the encoding $A(\hat{x_i})_{Parity}$ is used and only a single qubit is measured. All experiments are performed on simulated quantum hardware using the PennyLane framework \citep{bergholm2018pennylane}. We repeat each experiment five times for optimizers with stochastic elements (such as the freezing mechanism in WSBD). WSBD consistently shows stability, with a coefficient of variation in convergence between 0--3\%. Reported results are averaged over these runs. For the noisy ground-state energy experiments, we use the same circuit structure and test 1-, 2-, and 4-qubit systems with varying depth.  

% For all experiments we report \emph{forward passes} (FPs) to target performance---either classification accuracy (Table~\ref{tab:mnist_accuracy}) or ground-state energy (Table~\ref{tab:vqe_percent_reduction}). This choice is motivated by the structure of quantum training: each update requires a full circuit evaluation per input state, unlike classical ML where batching amortizes cost. Accuracy curves are therefore noisy at small qubit counts and finite shots, and for VQE tasks ``accuracy'' is not even defined. FP-to-target provides a \emph{fair, hardware-independent metric} of efficiency, directly tied to the dominant parameter-shift training cost, which scales as $O(\mathcal{M} \cdot 2|\params|)$ per update (with $\mathcal{M}$ shots and $|\params|$ trainable parameters). WSBD adds only $O(|\params|)$ classical overhead, negligible when $\mathcal{M}$ is in the hundreds to thousands. 
While wall-clock time is generally confounded by backend throughput, queueing delays, and simulator performance, we performed a small \emph{real-hardware calibration} to contextualize FP savings. Using 100 random QNNs (4q-2l, 8q-3l, 10q-5l), we measured forward-pass latencies on the real \textbf{IBM Heron R2 processor} as \textbf{1.33 $\pm$ 0.24 seconds of QPU execution time} and \textbf{4.74 $\pm$ 0.76 seconds of end-to-end wall time} (including submission and return). Thus, \emph{each forward pass saved by WSBD corresponds to roughly five seconds of wall time} on current hardware \citep{IBMQuantum2025}. These numbers can fluctuate depending on the backend and system load, since queue lengths and calibration states vary, but they nevertheless illustrate that FP reductions translate directly into practical time savings. For this reason, we retain FP-to-target as our \emph{primary evaluation metric}, since it provides a reproducible, hardware-agnostic measure of algorithmic efficiency, while our calibration confirms that FP reductions yield substantial wall-clock benefits in practice.

%%%% OLD SECTION ON TUNING  HERE

\subsection{Optimizers Considered}
We compare WSBD against both classical gradient-based and gradient-free optimizers. Standard baselines include stochastic gradient descent (SGD) and Adam \citep{kingma2014adam}. We also evaluate Nelder-Mead \citep{nelder1965simplex}, Simultaneous Perturbation Stochastic Approximation (SPSA) \citep{spall2002multivariate}, and Bayesian optimization \citep{jones1998efficient}, which are widely used in quantum machine learning due to their efficiency in the absence of explicit gradients.  In addition, we perform ablation studies with several WSBD variants. Deterministic Block Descent (DBD) freezes parameters with the lowest importance scores deterministically; this variant is to showcase the value of the stochastic freezing element in WSBD. Stochastic Block Descent (SBD) randomly freezes parameters without importance scores, this variant showcases that WSBD performance improvement doesn't come from freezing directly but its intelligent importance tracking. Layer-wise WSBD freezes whole layers based on summed importance scores. WSBD without reset does not reset importance scores of recently active parameters, this variant showcases that resetting scores enables more rapid assessment of parameter importance to speed up training. These comparisons help isolate the contribution of each design choice in WSBD. 

% \subsection{Hyperparameter Settings}
% \label{sec:hyerparams_chosen}
% We tune WSBD using grid search on two parameters, the freezing threshold $\lambda_f$ and the time window size $\tau$, using the MNIST task on a 10-qubit 5-layer model. First, we fix $\tau = 100$ and vary $\lambda_f$ across $\{0\%, 10\%, 30\%, 50\%, 70\%, 90\%\}$. Then we fix $\lambda_f = 70\%$ and vary $\tau$ across $\{25, 100, 300, 500\}$. The best performance is obtained with $\lambda_f = 70\%$ and $\tau = 100$. This supports the hypothesis that freezing a large fraction of less important parameters accelerates convergence and that shorter time windows help the optimizer adapt more effectively. We notice that these hyperparameter values are generally stable with good performance improvements for $\tau$ being between 100-500 and $\lambda_f$ being between 30-90\%. See Appendix for a detailed comparison of importance metrics and hyperparameters \ref{app:imp_scores}.

% \subsection{Evaluation Metrics}
% For MNIST classification we use cross entropy loss as the evaluation metric. For the parity problem we measure the accuracy of predicting the parity function. In addition, we assess convergence speed by recording the number of iterations and the training time needed to reach a target performance. These metrics allow us to fairly compare WSBD with other optimizers across tasks.
\section{EVALUATION AND RESULTS}
\begin{figure*}
    \centering
    \includegraphics[width=\linewidth]{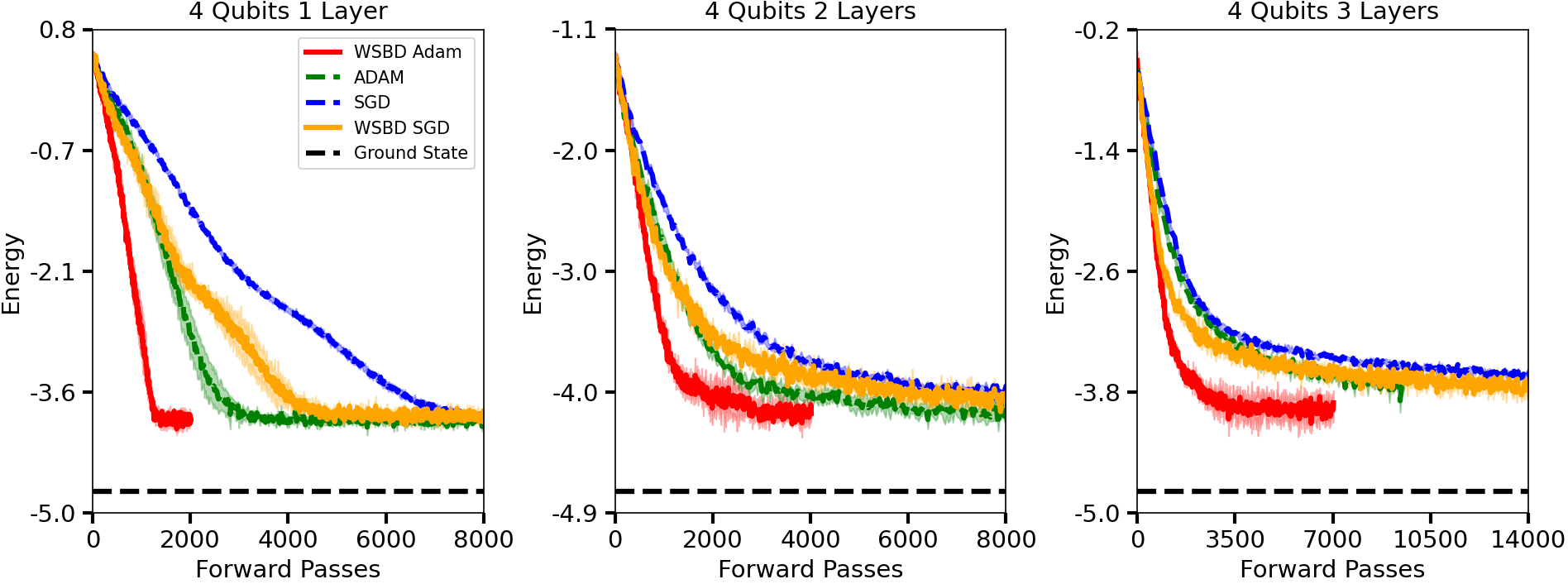}
    \caption{Training curves using Adam, SGD and their WSBD counterpart optimizers on the VQE problem. The black dotted line represents the ground state energy each optimizer aims to reach (closer is better). Each optimizer was trained until convergence. For the full training curves of all QNN sizes see Figure \ref{fig:vqe_all}. Table \ref{tab:vqe_percent_reduction} shows the percentage reduction in forward passes needed to converge using both WSBD SGD and WSBD Adam.}
    \label{fig:samp_vqe_training_curve}
\end{figure*}

\begin{table*}[t]
\centering
\caption{Side-by-side results. Left: percentage reduction in forward passes (FP) to reach target energy for the noisy VQE task (higher is better). Right: max accuracy for MNIST and Parity, plus speedup when adding WSBD. See Figures \ref{fig:vqe_all}, \ref{fig:qnn_combined} and \ref{fig:combined_accuracy_training_curves} for the respective training curves on all problems.}
\label{tab:vqe_mnist_side_by_side}

\renewcommand{\arraystretch}{1.15}
\setlength{\tabcolsep}{6pt}

\begin{subtable}[t]{0.47\textwidth}
\centering
\caption{VQE: \% FP reduction needed to reach target energy. Target energy represents the converged values of Adam and SGD.}
\label{tab:vqe_percent_reduction}
\begin{tabular}{lcc}
\toprule
\textbf{Model} & \textbf{WSBD-SGD} & \textbf{WSBD-Adam} \\
\textbf{Size}  & \textbf{v.s. SGD} & \textbf{v.s. Adam} \\
\midrule
1q, 2l & \textbf{25.9\%} & \textbf{43.9\%} \\
1q, 4l & \textbf{20.6\%} & \textbf{83.1\%} \\
1q, 8l & \textbf{21.5\%} & \textbf{69.0\%} \\
2q, 1l & \textbf{28.1\%} & \textbf{55.3\%} \\
2q, 2l & \textbf{58.2\%} & \textbf{82.8\%} \\
2q, 4l & \textbf{40.0\%} & \textbf{65.5\%} \\
4q, 1l & \textbf{34.0\%} & \textbf{59.9\%} \\
4q, 2l & \textbf{23.3\%} & \textbf{39.2\%} \\
4q, 3l & \textbf{46.4\%} & \textbf{76.5\%} \\
\bottomrule
\end{tabular}
\end{subtable}
\hfill
\begin{subtable}[t]{0.50\textwidth}
\centering
\caption{MNIST \& Parity: Max accuracy reached for each optimizer. The 'Speedup' compares the forward passes required for each optimizer to reach its own reported maximum accuracy. As the maximum accuracies achieved by WSBD and its baseline counterparts were consistently similar, this metric provides a direct comparison of training efficiency.}
\label{tab:mnist_accuracy}
% 6 columns total: l + 5 c's
\begin{tabular}{l@{\hskip 2pt}cc|@{\hskip 2pt}cc|@{\hskip 2pt}c}
\toprule
 & \multicolumn{2}{c}{Standard} & \multicolumn{2}{c}{WSBD} & Speedup \\
 
\cmidrule(lr){2-3}\cmidrule(lr){4-5}
\textbf{Model} & SGD & Adam & SGD & Adam & (SGD, Adam) \\
\midrule
\multicolumn{6}{c}{\textit{MNIST Classification Max Accuracy}} \\
4q 2l & 58.1\% & 70.2\% & 59.9\% & 70.4\% & $80.3\%, 46.2\%$ \\
8q 3l & 49.6\% & 68.5\% & 49.9\% & 68.7\% & $38.5\%, 30.2\%$ \\
10q 5l & 56.6\% & 73.7\% & 57.5\% & 74.2\% & $29.3\%, 17.6\%$ \\
\midrule
\multicolumn{6}{c}{\textit{Parity Problem Max Accuracy}} \\
4q 2l  & 50\% & 100\% & 100\% & 100\% & $\infty, 7.6\%$ \\
8q 3l & 100\% & 100\% & 100\% & 100\% & 48.8\%, 3.0\% \\
10q 5l & 50\% & 100\% & 50\% & 100\% & N/A, 20.7\% \\
\bottomrule
\end{tabular}
\end{subtable}

\end{table*}

% \begin{table}[t]
% \centering
% \caption{Percentage reduction in forward passes (FP) to reach target loss when replacing SGD/Adam with WSBD variants. Positive values indicate fewer FP (better).}
% \label{tab:wsbd_percent_reduction}
% \renewcommand{\arraystretch}{1.15}
% \setlength{\tabcolsep}{6pt}
% \begin{tabular}{llcc}
% \toprule
% \textbf{Task} & \textbf{Model} & \textbf{WSBD-SGD} & \textbf{WSBD-Adam}\\
% & \textbf{Size} & \textbf{v.s. SGD} & \textbf{v.s. Adam} \\
% \midrule
% MNIST & 4q, 2l & \textbf{63.6\%} & \textbf{29.5\%} \\
% MNIST & 8q, 3l & \textbf{60.1\%} & \textbf{61.0\%}\\
% MNIST & 10q, 5l & \textbf{60.9\%} & \textbf{45.8\%}\\
% Parity & 4q, 2l & \textbf{54.7\%} & \textcolor{red}{-6.1\%}\\
% Parity & 8q, 3l & \textbf{29.0\%} & \textbf{6.9\%} \\
% Parity & 10q, 5l & N/A & \textbf{12.5\%} \\
% \bottomrule
% \end{tabular}
% \end{table}

\label{sec:eval}
In this section we evaluate WSBD across three representative QNN tasks. Our analysis focuses on three themes: (i) scalable efficiency as circuit size increases, (ii) robustness under noise in hardware-realistic environments, and (iii) the importance of WSBD’s design choices through ablation studies. Together these experiments provide both empirical validation and insight into the mechanisms that make WSBD effective.

\subsection{Scalable Efficiency and Practical Savings in QNN Training}
A central motivation for WSBD is to accelerate training in large QNNs, where the cost of gradient evaluation scales linearly in the number of parameters $|\params|$. Table \ref{tab:vqe_to_taget_energy_DETAILED} shows the raw number of forward passes (FP) required for convergence across models of increasing size and how many forward passes are saved for all problems. While improvements are already visible on 4-qubit systems, the relative benefit of WSBD compounds with scale: on 8- and 10-qubit models, WSBD-SGD and WSBD-Adam save tens of thousands of forward passes compared to their baselines. The reason for this trend is twofold. First, larger circuits require proportionally more parameter-shift evaluations per update, which magnifies the savings achieved when WSBD focuses only on a subset of parameters. Second, barren plateaus — regions where gradients vanish exponentially with qubit count — become more prevalent as the system grows. In these landscapes, standard optimizers expend computational effort on parameters whose gradients are effectively noise. By concentrating updates on parameters with consistently strong gradient signals, WSBD both reduces overhead and steers optimization away from flat directions. This explains why WSBD not only improves convergence speed but also enables successful optimization where SGD or Adam begin to stagnate (e.g., the 4-qubit parity task in Fig. \ref{fig:xor_4q} where WSBD-SGD converged and SGD didn't). In summary, WSBD’s efficiency gains are not merely numerical savings but a reflection of how parameter-wise freezing interacts with the geometry of high-dimensional quantum optimization landscapes. The advantage therefore scales with problem size, making WSBD particularly suitable for the next generation of deeper QNNs.

Based on our real-hardware calibration on the IBM Heron R2 processor (Section~\ref{sec:exp_design}), each forward pass requires on average $\approx$5 seconds of end-to-end wall time. For example, in the VQE ground-state energy problem, WSBD-Adam reaches the target energy in roughly $3,252$ fewer forward passes on average compared to Adam, corresponding to a reduction of $\approx 4.5$ hours of wall-clock compute. For the MNIST problem the savings are more extreme with WSBD-Adam reaching its maximum accuracy while saving on average 36.3 hours of wall-clock compute and WSBD-SGD saving 89.9 hours of compute. Similarly for the parity problem, WSBD-Adam reached 100\% accuracy saving 8.8 hours of wall-clock compute, while WSBD-SGD saved \textbf{216} hours. These back-of-the-envelope calculations, while approximate and dependent on backend conditions, underscore the practical value of FP reductions: training speedups of tens of percent (Tables \ref{tab:vqe_mnist_side_by_side}) directly translate into \emph{hours of quantum compute saved}. Even at modest qubit counts, these reductions are practically significant: current quantum hardware is both limited and in high demand, with academic users often allocated only a few hours of device access per month. Saving several hours of wall-clock compute per training run can therefore determine whether an experiment is feasible at all, and such savings will compound as circuit widths and depths scale.

\subsection{Robustness Under Noise}
We next examine WSBD under noisy, hardware-realistic conditions. For the VQE task with transverse-field Ising Hamiltonians, we simulate errors calibrated from IBM Heron devices, including amplitude/phase damping, depolarization, and readout errors. Results in Tables \ref{tab:vqe_percent_reduction} and \ref{tab:vqe_to_taget_energy_DETAILED} reveal that WSBD retains significant advantages even when every gate and measurement is perturbed. For example, on the 2-qubit, 2-layer VQE problem, WSBD-SGD required 58\% fewer forward passes than SGD, while WSBD-Adam reduced forward passes by more than 80\%. 

These results can be interpreted mathematically. Let $\nabla f(\params)$ denote the true gradient of the cost function in the VQE problem, and let $\varepsilon$ represent a noise channel acting on each circuit evaluation. Under noise, the effective gradient becomes:$
\tilde{\nabla}f(\params) = \nabla f(\params) + \varepsilon(\params),$ where $\varepsilon(\params)$ is a stochastic disturbance induced by the various error sources. Standard optimizers attempt to estimate all components of $\nabla f$, but many of these are small in magnitude and easily overwhelmed by $\varepsilon$. WSBD, by design, biases updates toward parameters with consistently large cumulative gradients. This adaptive focus acts as a form of noise filtering: resources are less likely to be wasted on low-signal parameters, and optimization proceeds along directions where the true signal dominates the noise. The practical consequence is that WSBD remains effective even under the severe error models of near-term devices. In fact, its bias toward “signal-rich” parameters may be even more valuable in noisy settings than in idealized simulations, as evidenced by the divergence between baseline and WSBD performance in Tables \ref{tab:vqe_percent_reduction} and \ref{tab:mnist_accuracy} where the performance improvements were often greater in the noisy problem or the training Figures \ref{fig:samp_vqe_training_curve} and \ref{fig:vqe_all} where WSBD often reached a lower target energy overall.

\subsection{Why Each Component Matters: Ablation Study}

Finally, we analyze the contribution of WSBD’s design choices through ablations (Figure \ref{fig:qnn_combined}). Four variants were considered: Deterministic Block Descent (DBD): freezes the lowest-importance parameters deterministically. Stochastic Block Descent (SBD): freezes parameters randomly without importance scores. Layer-wise WSBD: aggregates importance scores per layer and freezes entire layers. WSBD without reset: retains past importance scores when parameters are reactivated. The results highlight several key insights. First, stochastic selection is crucial: DBD achieves convergence but often plateaus at higher losses, while stochastic freezing avoids premature elimination of parameters that may regain importance later. Second, importance scores provide the main source of WSBD’s advantage; SBD performs similarly to plain SGD, confirming that freezing alone is insufficient without intelligent parameter ranking. Third, parameter-wise granularity is essential for QNNs, where influence is not neatly aligned with layer boundaries. WSBD-SGD was nearly 50\% faster than its layer-wise counterpart, demonstrating that coarse freezing strategies cannot capture the entangled parameter dependencies of quantum circuits. Finally, the reset mechanism prevents stale importance values from dominating; without resets, optimization stagnates on both parity and MNIST tasks. Together, these ablations show that each design element of WSBD is necessary. Far from being an incidental combination of heuristics, WSBD’s success depends on the interplay of stochasticity, fine-grained importance tracking, and adaptive resetting. Finally for the gradient-free optimizers, although Simultaneous Perturbation Stochastic Approximation (SPSA) \citep{spall2002multivariate}, Nelder-Mead \citep{nelder1965simplex}, and Bayesian optimization \citep{jones1998efficient} require fewer circuit evaluations per step, they consistently stagnated on larger models. This suggests that their low-cost updates do not provide enough directional information for challenging optimization landscapes.

\section{CONCLUSION}

In this work, we have introduced Weighted Stochastic Block Descent, a dynamic parameter freezing optimizer designed to directly confront the burdens that hinder QNN training. Our results showcase that by intelligently focusing on a subset of the most influential parameters, WSBD consistently accelerates convergence and achieves superior performance compared to standard optimizers and gradient-free methods. Our ablation studies validate the core principles behind WSBD's design: the efficacy of its stochastic selection process, the use of gradient-based importance scores, the critical need for a granular parameter-wise approach, and the adaptive score-resetting mechanism. These elements, supported by a formal proof of convergence, establish dynamic parameter freezing as a potent and practical strategy for QNN optimization. Beyond these immediate results, this work opens up several exciting future directions. The principles of WSBD could be extended to create hardware-aware optimizers, where freezing is tailored to the noise profiles of specific quantum devices. Furthermore, the ability to make training more tractable may enable researchers to explore novel QNN architectures that are currently considered too deep or wide to train effectively. \textit{Ultimately, by transforming the training process from a brute-force calculation into an intelligent allocation of scarce resources, WSBD represents a significant step toward making QML a practical reality.}

\subsection*{Acknowledgements}
This work was supported by the following NSF grants: CSR-2402328, CAREER-2338457, CSR-2406069, CSR-2323100, and HRD-2225201, as well as by the DOE Office of Science User Facility, supported by the Office of Science of the U.S. Department of Energy under Contract No. DE-AC02-05CH11231, using NERSC award DDR-ERCAP0035598. We also acknowledge support from Rice University and Florida International University.
\bibliographystyle{apalike}
\bibliography{ref}
 % or plainnat, apalike, abbrvnat etc.
% \bibliography{iclr2026_conference.bib}

% \begin{thebibliography}{}
% \setlength{\itemindent}{-\leftmargin}
% \makeatletter\renewcommand{\@biblabel}[1]{}\makeatother
% \bibitem{} J.~Alspector, B.~Gupta, and R.~B.~Allen (1989).
%     \newblock Performance of a stochastic learning microchip.
%     \newblock In D. S. Touretzky (ed.),
%     \textit{Advances in Neural Information Processing Systems 1}, 748--760.
%     San Mateo, Calif.: Morgan Kaufmann.

% \bibitem{} F.~Rosenblatt (1962).
%     \newblock \textit{Principles of Neurodynamics.}
%     \newblock Washington, D.C.: Spartan Books.

% \bibitem{} G.~Tesauro (1989).
%     \newblock Neurogammon wins computer Olympiad.
%     \newblock \textit{Neural Computation} \textbf{1}(3):321--323.
% \end{thebibliography}

%%%%%%%%%%%%%%%%%%%%%%%%%%%%%%%%%%%%%%%%%%%%%%%%%%%%%%%%%%%%
\section*{Checklist}

% %%% BEGIN INSTRUCTIONS %%%
% The checklist follows the references. For each question, choose your answer from the three possible options: Yes, No, Not Applicable.  You are encouraged to include a justification to your answer, either by referencing the appropriate section of your paper or providing a brief inline description (1-2 sentences). 
% Please do not modify the questions.  Note that the Checklist section does not count towards the page limit. Not including the checklist in the first submission won't result in desk rejection, although in such case we will ask you to upload it during the author response period and include it in camera ready (if accepted).

% \textbf{In your paper, please delete this instructions block and only keep the Checklist section heading above along with the questions/answers below.}
% %%% END INSTRUCTIONS %%%

\begin{enumerate}

  \item For all models and algorithms presented, check if you include:
  \begin{enumerate}
    \item A clear description of the mathematical setting, assumptions, algorithm, and/or model. \textbf{Yes.} WSBD  is summarized in Algorithm \ref{alg:wsbd} with proof and assumptions given in Appendix \ref{sec:proof}.
    % [Yes/No/Not Applicable]
    \item An analysis of the properties and complexity (time, space, sample size) of any algorithm.
    \textbf{Yes.} The time complexity for WSBD is given in Section \ref{sec:optim} being $O(|\params|)$.
    \item (Optional) Anonymized source code, with specification of all dependencies, including external libraries. \textbf{Yes.} All source code is provided in the supplementary material with hardware and dependencies specification given in Appendix \ref{app:codeanddataappendix}.
  \end{enumerate}

  \item For any theoretical claim, check if you include:
  \begin{enumerate}
    \item Statements of the full set of assumptions of all theoretical results. \textbf{Yes.} See Appendix \ref{sec:proof}.
    \item Complete proofs of all theoretical results. \textbf{Yes.} L-smoothness and convergence is proven in Appendix \ref{sec:proof}.
    \item Clear explanations of any assumptions. \textbf{Yes.} Standard assumptions for convergence is given in Appendix \ref{app:converence_proof}. 
  \end{enumerate}

  \item For all figures and tables that present empirical results, check if you include:
  \begin{enumerate}
    \item The code, data, and instructions needed to reproduce the main experimental results (either in the supplemental material or as a URL). \textbf{Yes.} All code to reproduce our findings are available on GitHub.
    \item All the training details (e.g., data splits, hyperparameters, how they were chosen). \textbf{Yes.} Section \ref{sec:optim} and Appendix \ref{app:imp_scores} shows the importance score, freezing percentile, and training window used.
    \item A clear definition of the specific measure or statistics and error bars (e.g., with respect to the random seed after running experiments multiple times). \textbf{Yes.} See Section \ref{sec:exp_design} with a variation in convergence less than 3\%.
    \item A description of the computing infrastructure used. (e.g., type of GPUs, internal cluster, or cloud provider). \textbf{Yes.} Both quantum and classical hardware used is reported in Appendix \ref{app:codeanddataappendix}.
  \end{enumerate}

  \item If you are using existing assets (e.g., code, data, models) or curating/releasing new assets, check if you include:
  \begin{enumerate}
    \item Citations of the creator If your work uses existing assets. \textbf{Yes.} MNIST dataset is properly cited.
    \item The license information of the assets, if applicable. \textbf{Not Applicable.}
    \item New assets either in the supplemental material or as a URL, if applicable. \textbf{Not Applicable.}
    \item Information about consent from data providers/curators. \textbf{Not Applicable.}
    \item Discussion of sensible content if applicable, e.g., personally identifiable information or offensive content. \textbf{Not Applicable.}
  \end{enumerate}

  \item If you used crowdsourcing or conducted research with human subjects, check if you include:
  \begin{enumerate}
    \item The full text of instructions given to participants and screenshots. \textbf{Not Applicable.}
    \item Descriptions of potential participant risks, with links to Institutional Review Board (IRB) approvals if applicable. \textbf{Not Applicable.}
    \item The estimated hourly wage paid to participants and the total amount spent on participant compensation. \textbf{Not Applicable.}
  \end{enumerate}

\end{enumerate}

\clearpage
\appendix
\thispagestyle{empty}

% Supplementary material: To improve readability, you must use a single-column format for the supplementary material.
\onecolumn
\aistatstitle{APPENDIX}

\section{THEORETICAL ANALYSIS}
\label{sec:proof}
% In this section we will formally prove QNNs are L-smooth, an assumption we use to prove convergence for WSBD.

% \begin{definition}\label{def:L-smooth}
% A function $f: \mathbb{R}^P \to \mathbb{R}$ is \textit{L-smooth} for a constant $L \ge 0$ if $\forall \boldsymbol{\theta}_1, \boldsymbol{\theta}_2 \in \mathbb{R}^P$, the following inequality holds \cite{nesterov2013introductory}:
% \begin{equation}
%     \|\nabla f(\boldsymbol{\theta}_1) - \nabla f(\boldsymbol{\theta}_2)\|_2 \le L \|\boldsymbol{\theta}_1 - \boldsymbol{\theta}_2\|_2
% \end{equation}
% For a twice-differentiable function, a sufficient condition for L-smoothness is that the spectral norm of its Hessian matrix is globally bounded \cite{nesterov2013introductory}:
% \begin{equation}
%     \|\nabla^2 f(\boldsymbol{\theta})\|_2 \le L
% \end{equation}
% \end{definition}

%%%%%%%%%%%%%%%%%%%%%%%%%%%

In this section, we formally prove that the objective functions of parametrized quantum neural networks (QNNs) are $L$-smooth. Rather than computing the Hessian explicitly or bounding it term by term, our strategy will be to study how the gradient changes along parameter paths. By carefully tracking commutator structures and using only boundedness of the generators and the observable, we arrive at a direct and conceptually transparent proof of smoothness.

\begin{theorem}[Smoothness of QNN objectives]
\label{thm:elem_bound}
Let the objective function be denoted by
\[
f(\params) \;=\; \langle \psi_0 \mid U(\params)^\dagger M U(\params) \mid \psi_0\rangle,
\qquad 
U(\params)=\prod_{j=1}^P U_j(\params_j),\quad
U_j(\params_j)=e^{-i\,\params_j G_j},
\]
where $M$ is a bounded Hermitian observable, and each $G_j$ is a bounded Hermitian generator. Then the gradient $\nabla f(\params)$ is globally Lipschitz continuous, i.e., there exists a constant $L<\infty$ such that
\[
\|\nabla f(\params+h)-\nabla f(\params)\| \;\le\; L\,\|h\|
\qquad\forall\params,h\in\mathbb{R}^P.
\]
\end{theorem}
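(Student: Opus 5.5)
To prove Theorem~\ref{thm:elem_bound}, I would show that $f$ is smooth with a globally bounded Hessian. Then Lipschitz continuity of $\nabla f$ follows from the mean value theorem along the segment $\params+sh$, $s\in[0,1]$. The paper's preamble suggests tracking how the gradient changes along paths, and that is exactly this argument:
\[
\nabla f(\params+h)-\nabla f(\params)=\int_0^1 \nabla^2 f(\params+sh)\,h\,ds ,
\]
so $L=\sup_{\params}\|\nabla^2 f(\params)\|_2$ works. The whole task therefore reduces to a uniform bound on the second partial derivatives.

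\textbf{Step 1: first derivatives as commutators.} Write $U=U_P\cdots U_1$ (the ordering convention does not matter). Split $U$ as $V_{>j}\,U_j\,V_{<j}$, with $V_{>j}$ the gates after $j$ and $V_{<j}$ the gates before it. Since $\partial_j U_j=-iG_jU_j$, I get
\[
\partial_j f=i\,\langle\psi_0|V_{<j}^\dagger U_j^\dagger [G_j,\tilde M_j]\,U_jV_{<j}|\psi_0\rangle,\qquad \tilde M_j=V_{>j}^\dagger M V_{>j}.
\]
Conjugation by a unitary preserves the operator norm, so $\|\tilde M_j\|=\|M\|$. Hence $|\partial_j f|\le 2\|G_j\|\|M\|$. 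This also confirms that $f$ is real-analytic, since it is a finite composition of matrix exponentials. If $\mathcal H$ is infinite-dimensional, the boundedness of the $G_j$ still makes $\params\mapsto e^{-i\params_jG_j}$ norm-analytic.

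\textbf{Step 2: second derivatives as nested commutators.} Differentiating again in $\params_k$, say with $k\ge j$ so that gate $k$ sits inside $\tilde M_j$ or equals gate $j$, produces one more commutator with the generator $G_k$, conjugated by unitaries. The result is
\[
\partial_k\partial_j f=-\langle\phi|\,[G_k',[G_j',M']]\,|\phi\rangle
\]
for some unit vector $\phi$ and unitarily conjugated copies $G'$, $M'$ of the operators. The diagonal case $k=j$ gives the analogous form $[G_j,[G_j,\cdot]]$. Using $\|[A,B]\|\le2\|A\|\|B\|$ twice gives $|\partial_k\partial_j f|\le 4\|G_j\|\|G_k\|\|M\|$.

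\textbf{Step 3: from entries to the spectral norm.} Set $g=(\|G_1\|,\dots,\|G_P\|)$. The entry bound says $|H_{jk}|\le 4\|M\|\,g_jg_k$, so I can bound the Hessian by
\[
\|\nabla^2 f\|_2\le\|\nabla^2 f\|_F\le 4\|M\|\,\|g\|_2^2=4\|M\|\sum_j\|G_j\|^2=:L .
\]
For Pauli generators $G_j=\sigma/2$ this gives $L=P\|M\|$.

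The main obstacle is Step 2: the bookkeeping of where the $k$-th gate sits relative to the $j$-th one. I would handle it by treating the two orderings ($k>j$ and $k<j$) symmetrically; for the case where gate $k$ lies in $V_{<j}$, I would move the derivative onto the state side. I also expect a mild technicality in justifying the interchange of derivatives and the integral form of the mean value theorem, but both are immediate from analyticity.
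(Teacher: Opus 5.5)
Your proof is correct and follows essentially the same route as the paper: interpolate along $\params+sh$, write the change in the gradient as an integral of its derivative, and bound that derivative uniformly by nested commutators, using $\|[A,B]\|\le 2\|A\|\|B\|$ and the unitary invariance of the operator norm. Your version is more explicit than the paper's, since you identify $\Phi'(s)$ with $\nabla^2 f(\params+sh)\,h$, pass from entrywise bounds to the spectral norm via the Frobenius norm, and obtain the concrete constant $L=4\|M\|\sum_j\|G_j\|^2$, whereas the paper leaves the constant and its dependence on $P$ implicit.
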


\begin{proof}
We proceed in steps.

\paragraph{Step 1: Expressing derivatives as commutators.}
Let $M(\params)=U(\params)^\dagger M U(\params)$ denote the Heisenberg-evolved observable. A straightforward calculation yields
\[
\partial_j f(\params) 
= i\,\langle \psi_0 \mid U(\params)^\dagger [M, \tilde G_j(\params)] U(\params)\mid \psi_0\rangle,
\]
where $\tilde G_j(\params)$ is the generator $G_j$ conjugated by part of the circuit. Since conjugation by a unitary preserves operator norm, we always have $\|\tilde G_j(\params)\|=\|G_j\|$.

\paragraph{Step 2: Following the gradient along a path.}
Fix $\params,h\in\mathbb{R}^P$, and consider the interpolation $\params(t)=\params+th$, $t\in[0,1]$. Define
\[
\Phi(t)=\nabla f(\params(t)).
\]
By construction,
\[
\nabla f(\params+h)-\nabla f(\params) = \Phi(1)-\Phi(0) = \int_0^1 \Phi'(t)\,dt.
\]
Hence the Lipschitz property will follow once we show that $\|\Phi'(t)\|$ can be bounded uniformly by a constant multiple of $\|h\|$.

\paragraph{Step 3: Structure of $\Phi'(t)$.}
Differentiating $\Phi_j(t)$ with respect to $t$ produces expressions that are linear combinations of expectation values of \emph{double commutators} of the form
\[
\langle \psi_0 \mid U(\params(t))^\dagger [A,[B,C]] U(\params(t)) \mid \psi_0\rangle,
\]
where $A,B,C$ are drawn from the bounded set $\{M\}\cup \{\tilde G_k(\params(t))\}_{k=1}^P$. The coefficients of these terms depend linearly on the components of $h$.

\paragraph{Step 4: Uniform boundedness.}
By the standard inequality $\|[A,[B,C]]\|\le 4\|A\|\|B\|\|C\|$ and the invariance of operator norm under conjugation, each of these expectation values is uniformly bounded by a constant multiple of $\|h\|$. Importantly, the constant depends only on $\|M\|$ and the norms of the generators $\|G_j\|$, and is independent of $\params$, $h$, or $t$.

Thus, for all $t\in[0,1]$,
\[
\|\Phi'(t)\| \;\le\; C\,\|h\|
\]
for some finite constant $C$ determined solely by the problem data.

\paragraph{Step 5: Conclusion.}
Combining the previous displays gives
\[
\|\nabla f(\params+h)-\nabla f(\params)\|
\le \int_0^1 \|\Phi'(t)\|\,dt
\le C\,\|h\|.
\]
This shows that $\nabla f$ is globally Lipschitz, i.e.\ $f$ is $L$-smooth, with $L=C$.
\end{proof}

In summary, we have shown that QNN objectives are $L$-smooth under the mild assumption that the observable and all generators are bounded. The proof did not require constructing the Hessian or computing explicit constants; instead, it relied on the pathwise behavior of the gradient and the boundedness of commutator structures. This establishes smoothness in a direct and conceptually simple way.

\subsection{Proof of Convergence WSBD-SGD}
\label{app:converence_proof}
This proof of convergence is specifically tailored to the standard gradient descent update rule, thereby formally guaranteeing convergence for the WSBD-SGD variant. While our extensive empirical results demonstrate strong performance and faster convergence for the WSBD-Adam variant across all tasks, a formal convergence proof for adaptive optimizers like Adam is considerably more complex, as it relies on a different set of assumptions to bound the update steps. We believe a formal proof for the adaptive case is achievable and propose its development as a promising direction for future work.

\textbf{Variable Names:} Let $\mathcal{C}^{(t)}$represent the objective function and $\nabla \mathcal{C}^{(t)}$ represent the gradient for the current parameter vector $\params^{(t)}$. Let $p_k^{(t)} = \mathbb{P}(\theta_k \in \mathbb{A}^{(t)} | \params^{(t)})$ be the probability that parameter $\theta_k$ is in the active set. Let $\delta^{(t)}$ be a vector with entries $\delta_k^{(t)} = 1$ if $\theta_k \in\mathbb{A}^{(t)}$ and $0$ if $\theta_k \notin \mathbb{A}^{(t)}$. 

\textbf{Assumptions:}
\begin{itemize}
    \item \textit{Cost Function is L-Smooth}: We prove this property in Theorem \ref{thm:elem_bound}.
    \item  \textit{Strictly Positive Selection Probability}: $p_k^{(t)} \ge p_{\min} > 0$, which is guaranteed as $p_{min} = \frac{\epsilon}{\sum_{i=1}^{|\params|} (|\mathcal{I}_p(\theta_i)| + \epsilon)}$.
    \item \textit{The cost function $\mathcal{C}^{(t)}$ is bounded below optimum} $\mathcal{C}^*$.
\end{itemize}

\begin{proof}
We begin the proof with Assumption 1.
\begin{equation*}
\mathcal{C}^{(t+1)} \le \mathcal{C}^{(t)} + \langle \nabla\mathcal{C}^{(t)}, \params^{(t+1)} - \params^{(t)} \rangle + \frac{L}{2}||\params^{(t+1)} - \params^{(t)}||^2
\end{equation*}
The WSBD update rule is given by: $$\params^{(t+1)} = \params^{(t)} - \eta\delta^{(t)}\nabla\mathcal{C}^{(t)}$$% where $\delta^{(t)}$ is a vector with entries $\delta_k^{(t)} = 1$ if $\theta_k$ is in $\mathbb{A}^{(t)}$ and $0$ otherwise. 
Substituting the update rule gives:
\begin{equation*}
\mathcal{C}^{(t+1)} \le \mathcal{C}^{(t)} - \eta\langle \nabla\mathcal{C}^{(t)}, \delta^{(t)}\nabla\mathcal{C}^{(t)} \rangle + \frac{L\eta^2}{2}||\delta^{(t)}\nabla\mathcal{C}^{(t)}||^2
\end{equation*}

Taking the expectation over the stochasticity of the process (both in the gradient estimation and $\delta^{(t)}$):
\begin{align*}
\mathbb{E}[\mathcal{C}^{(t+1)}] \le \mathbb{E}[\mathcal{C}^{(t)}] - \eta\mathbb{E}[\langle \nabla\mathcal{C}^{(t)}, \delta^{(t)}\nabla\mathcal{C}^{(t)} \rangle]  + \frac{L\eta^2}{2}\mathbb{E}[||\delta^{(t)}\nabla\mathcal{C}^{(t)}||^2]
\end{align*}

We analyze the inner product term. Using the law of total expectation, conditioned on $\params^{(t)}$:
\begin{align*}
\mathbb{E}[\langle \nabla\mathcal{C}^{(t)}, \delta^{(t)}\nabla\mathcal{C}^{(t)} \rangle] 
%&= \mathbb{E}\left[\textstyle\sum_{k=1}^{|\params|} (\nabla_k\mathcal{C}^{(t)})^2 \delta_k^{(t)}\right] \\
&= \mathbb{E}\left[\mathbb{E}\left[\sum_{k=1}^{|\params|} (\nabla_k\mathcal{C}^{(t)})^2 \delta_k^{(t)} \bigg| \params^{(t)}\right]\right]
= \mathbb{E}\left[\sum_{k=1}^{|\params|} (\nabla_k\mathcal{C}^{(t)})^2 \mathbb{E}[\delta_k^{(t)} | \params^{(t)}]\right]
\end{align*}

$\mathbb{E}[\delta_k^{(t)} | \params^{(t)}]$ can be simplified to $p_k^{(t)} \geq p_{min}$, thus:
$$
\mathbb{E}[\langle \nabla\mathcal{C}^{(t)}, \delta^{(t)}\nabla\mathcal{C}^{(t)} \rangle] %\ge \mathbb{E}\left[\textstyle\sum_{k=1}^{|\params|} (\nabla_k\mathcal{C}^{(t)})^2 p_{\min}\right]  \\ = p_{\min}\mathbb{E}[||\nabla\mathcal{C}^{(t)}||^2]
\geq  p_{\min}\mathbb{E}[||\nabla\mathcal{C}^{(t)}||^2]
$$

Next, we bound the final term $||\delta^{(t)}\nabla\mathcal{C}^{(t)}||^2$. Since $\delta_k^{(t)} \in \{0, 1\}$, we have $(\delta_k^{(t)})^2 = \delta_k^{(t)} \le 1$, thus:
\begin{align*}
\mathbb{E}[||\delta^{(t)}\nabla\mathcal{C}^{(t)}||^2] \leq
%\\
%= \mathbb{E}\left[\sum_{k=1}^{|\params|} (\delta_k^{(t)})^2 (\nabla_k\mathcal{C}^{(t)})^2\right] \le \mathbb{E}\left[\sum_{k=1}^{|\params|} (\nabla_k\mathcal{C}^{(t)})^2\right] \\ = 
\mathbb{E}[||\nabla\mathcal{C}^{(t)}||^2]
\end{align*}

Substituting these bounds back into the main inequality:
\begin{align*}
\mathbb{E}[\mathcal{C}^{(t+1)}] &\le \mathbb{E}[\mathcal{C}^{(t)}] - \eta p_{\min}\mathbb{E}[||\nabla\mathcal{C}^{(t)}||^2] + \frac{L\eta^2}{2}\mathbb{E}[||\nabla\mathcal{C}^{(t)}||^2]
= \mathbb{E}[\mathcal{C}^{(t)}] - \eta\left(p_{\min} - \frac{L\eta}{2}\right)\mathbb{E}[||\nabla\mathcal{C}^{(t)}||^2]
\end{align*}

For convergence, we require the learning rate $\eta$ to be chosen such that $p_{\min} > \frac{L\eta}{2}$. Let $C_\eta = \eta(p_{\min} - \frac{L\eta}{2})$. Rearranging the inequality:
\begin{equation*}
C_\eta \mathbb{E}[||\nabla\mathcal{C}^{(t)}||^2] \le \mathbb{E}[\mathcal{C}^{(t)}] - \mathbb{E}[\mathcal{C}^{(t+1)}]
\end{equation*}

Now, we sum this inequality from $t=0$ to $T-1$:
\begin{align*}
\sum_{t=0}^{T-1} C_\eta \mathbb{E}[||\nabla\mathcal{C}^{(t)}||^2] &\le \sum_{t=0}^{T-1} (\mathbb{E}[\mathcal{C}^{(t)}] - \mathbb{E}[\mathcal{C}^{(t+1)}])
\end{align*}

This simplifies to:
\begin{align*}
C_\eta \sum_{t=0}^{T-1} \mathbb{E}[||\nabla\mathcal{C}^{(t)}||^2] &\le \mathbb{E}[\mathcal{C}^{(0)}] - \mathbb{E}[\mathcal{C}^{(T)}]
\end{align*}

Following assumption 3, we have $\mathbb{E}[\mathcal{C}^{(T)}] \ge \mathcal{C}^*$.
\begin{equation*}
\sum_{t=0}^{T-1} \mathbb{E}[||\nabla\mathcal{C}^{(t)}||^2] \le \frac{\mathcal{C}^{(0)} - \mathcal{C}^*}{C_\eta}
\end{equation*}

Dividing by $T$ and taking the limit as $T \to \infty$:
\begin{equation*}
\lim_{T\to\infty} \frac{1}{T}\sum_{t=0}^{T-1} \mathbb{E}[||\nabla\mathcal{C}^{(t)}||^2] \le \lim_{T\to\infty} \frac{\mathcal{C}^{(0)} - \mathcal{C}^*}{T C_\eta} = 0
\end{equation*}

Thus, the expected squared norm of the gradient converges to zero on average, which completes the proof of convergence for WSBD. Convergence is therefore guaranteed for any importance metric $\mathcal{I}_p$ that ensures $p_{min} > 0$, training window $\tau > 0$, and for any freezing threshold $\lambda_f$ that ensures $|\mathbb{A}| \geq 1$, making WSBD highly tunable.
\end{proof}

\subsection{Proof of Convergence for WSBD-ADAM (AMSGrad Variant)}
\label{app:proof_wsbd_adam}

While standard Adam lacks general convergence guarantees in certain non-convex settings \citep{reddi2019convergence}, its AMSGrad variant restores convergence by enforcing non-increasing adaptive learning rates. As AMSGrad and Adam behave similarly in practice, we provide a convergence proof for \textbf{WSBD-AMSGrad}, which serves as a  proxy for the WSBD-Adam optimizer used in the main text.

\paragraph{Update Rule.}
Let $g_t = \nabla \mathcal{C}^{(t)}$ be the (stochastic) gradient at iteration $t$, and let $\delta^{(t)} \in \{0,1\}^d$ be the coordinate-wise WSBD freezing mask. The masked adaptive moments evolve as:
\begin{align*}
    m_t &= \beta_1 m_{t-1} + (1-\beta_1)(\delta^{(t)} \odot g_t), \\
    v_t &= \beta_2 v_{t-1} + (1-\beta_2)(\delta^{(t)} \odot g_t)^2, \\
    \hat{v}_t &= \max(\hat{v}_{t-1}, v_t) \qquad \text{(AMSGrad correction)}, \\
    \theta^{(t+1)} &= \theta^{(t)} - \eta_t \, \delta^{(t)} \odot \frac{m_t}{\sqrt{\hat{v}_t} + \epsilon}.
\end{align*}
Let $H_t$ denote the diagonal preconditioner with entries $(H_t)_{kk} = 1/(\sqrt{\hat{v}_{t,k}} + \epsilon)$.

\paragraph{Additional Assumptions.}
In addition to the smoothness and bounded-below assumptions from Appendix~\ref{app:converence_proof}, we assume:
\begin{itemize}
    \item \textbf{Bounded Gradients:} $\| g_t \|_\infty \le G_\infty$. This implies
    \[
    0 < c_\ell \;\le\; (H_t)_{kk} \;\le\; c_u < \infty.
    \]
    \item \textbf{Decaying Step Size:} $\sum_{t} \eta_t = \infty$ and $\sum_t \eta_t^2 < \infty$.
    \item \textbf{Mask Selection Probability:} Each coordinate is active with probability at least $p_{\min} > 0$:
    \[
        \mathbb{P}\!\left[(\delta^{(t)})_k = 1\right] \ge p_{\min}.
    \]
    The mask is independent of stochastic gradient noise conditioned on past iterates.
\end{itemize}

\begin{proof}
By $L$-smoothness of $\mathcal{C}$,
\begin{equation}\label{eq:smoothness}
\mathcal{C}^{(t+1)} \le \mathcal{C}^{(t)} +
\big\langle \nabla\mathcal{C}^{(t)}, \theta^{(t+1)} - \theta^{(t)} \big\rangle
+ \frac{L}{2}\big\|\theta^{(t+1)} - \theta^{(t)}\big\|^2.
\end{equation}
Substituting the update $\theta^{(t+1)} - \theta^{(t)} = - \eta_t\, \delta^{(t)} \odot H_t m_t$, we obtain
\begin{align*}
\mathcal{C}^{(t+1)}
&\le
\mathcal{C}^{(t)}
- \eta_t \big\langle \nabla\mathcal{C}^{(t)},\, \delta^{(t)} \odot H_t m_t \big\rangle
+ \frac{L\eta_t^2}{2}\big\|\delta^{(t)} \odot H_t m_t\big\|^2.
\end{align*}

\paragraph{Bounding the Quadratic Term.}
Since $\|g_t\|_\infty \le G_\infty$ and AMSGrad ensures monotone $\hat{v}_t$, standard arguments \citep{reddi2019convergence} imply that $m_t$ and $H_t m_t$ are uniformly bounded. Thus, there exists $K>0$ such that
\[
\big\|\delta^{(t)} \odot H_t m_t\big\|^2 \le K,
\]
and therefore
\[
\frac{L\eta_t^2}{2}\big\|\delta^{(t)} \odot H_t m_t\big\|^2 \le \frac{LK}{2}\eta_t^2.
\]

\paragraph{Analyzing the Descent Term.}
Taking conditional expectation over both gradient noise and mask randomness:
\[
\mathbb{E}\!\left[
\big\langle \nabla\mathcal{C}^{(t)},\, \delta^{(t)} \odot H_t m_t \big\rangle
\mid \mathcal{F}_t
\right]
=
\Big\langle \nabla\mathcal{C}^{(t)},\,
\mathbb{E}[\delta^{(t)} \mid \mathcal{F}_t] \odot H_t m_t \Big\rangle.
\]
Since each coordinate satisfies $\mathbb{E}[(\delta^{(t)})_k \mid \mathcal{F}_t] \ge p_{\min}$,
\[
\mathbb{E}[\delta^{(t)} \mid \mathcal{F}_t] \succeq p_{\min} I,
\]
and thus
\[
\mathbb{E}\!\left[
\big\langle \nabla\mathcal{C}^{(t)},\, \delta^{(t)} \odot H_t m_t \big\rangle
\right]
\ge
p_{\min}\,
\mathbb{E}\!\left[
\big\langle \nabla\mathcal{C}^{(t)},\, H_t m_t \big\rangle
\right].
\]

\paragraph{Invoking the AMSGrad Descent Lemma.}
Non-convex AMSGrad analysis (e.g.\ \citep{reddi2019convergence}) guarantees constants $c>0$ and $B<\infty$ such that
\[
\sum_{t=0}^{T-1} \eta_t\,
\mathbb{E}\!\left[
\big\langle \nabla\mathcal{C}^{(t)},\, H_t m_t\big\rangle
\right]
\;\ge\;
c\sum_{t=0}^{T-1} \eta_t\, \mathbb{E}\|\nabla\mathcal{C}^{(t)}\|^2 - B.
\]
Combining with the bound above:
\[
\sum_{t=0}^{T-1} \eta_t\,
\mathbb{E}\!\left[
\big\langle \nabla\mathcal{C}^{(t)},\, \delta^{(t)} \odot H_t m_t\big\rangle
\right]
\;\ge\;
p_{\min}\!\left(
c\sum_{t=0}^{T-1} \eta_t\, \mathbb{E}\|\nabla\mathcal{C}^{(t)}\|^2 - B
\right).
\]

\paragraph{Final Convergence Argument.}
Taking expectations in \eqref{eq:smoothness}, summing from $t=0$ to $T-1$, and using the bounds above yields:
\[
p_{\min} c
\sum_{t=0}^{T-1} \eta_t\,\mathbb{E}\|\nabla\mathcal{C}^{(t)}\|^2
\;\le\;
\mathcal{C}^{(0)} - \mathcal{C}^* + p_{\min} B + \frac{LK}{2}\sum_{t=0}^{\infty} \eta_t^2
\;=\; M < \infty.
\]
Because $\sum_t \eta_t = \infty$, finiteness of the weighted sum implies
\[
\liminf_{t\to\infty} \mathbb{E}\|\nabla\mathcal{C}^{(t)}\|^2 = 0,
\]
i.e.\ the WSBD-AMSGrad iterates converge to a stationary point in expectation.
\end{proof}

\subsection{General Convergence Framework for WSBD}
\label{app:general_convergence}

Having established convergence for SGD and AMSGrad, we now extend the theoretical guarantees to a broad class of optimizers. The key observation is that WSBD acts as a stochastic coordinate mask that scales the expected descent direction of the base optimizer without amplifying its variance. Under mild and standard assumptions on the base optimizer $\mathcal{O}$, this suffices to preserve convergence.

\paragraph{Generalized Update Rule.}
Let $\mathcal{O}$ be any optimizer that, at step $t$, generates a proposed update vector $u_t \in \mathbb{R}^{|\theta|}$ based on the gradient $\nabla\mathcal{C}^{(t)}$ and internal states (e.g., momentum buffers). The standard update would be $\theta^{(t+1)} = \theta^{(t)} - \eta_t u_t$.
The \textbf{WSBD-augmented update} applies the freezing mask $\delta^{(t)} \in \{0,1\}^{|\theta|}$ to this vector:
\begin{equation}
    \theta^{(t+1)} = \theta^{(t)} - \eta_t (\delta^{(t)} \odot u_t)
\end{equation}

\paragraph{Assumptions on Optimizer $\mathcal{O}$.}
Let $\mathcal{F}_t$ denote the filtration generated by all randomness up to step $t$ (gradients, mini-batches, internal states). We assume:
\begin{enumerate}
    \item \textbf{L-Smoothness:} The objective function $\mathcal{C}$ is $L$-smooth and bounded below by $\mathcal{C}^*$.
    \item \textbf{Descent Condition:} The base optimizer produces a valid descent direction in expectation. Specifically, there exists a constant $c_1 > 0$ such that:
    \begin{equation} \label{eq:gen_descent}
        \mathbb{E}[\langle \nabla\mathcal{C}^{(t)}, u_t \rangle \mid \mathcal{F}_t] \ge c_1 \|\nabla\mathcal{C}^{(t)}\|^2
    \end{equation}
    \item \textbf{Bounded Update Magnitude:} The magnitude of the update vector is bounded. There exists $K > 0$ such that:
    \begin{equation}
        \mathbb{E}[\|u_t\|^2 \mid \mathcal{F}_t] \le K
    \end{equation}
    \item \textbf{Mask Independence and Minimum Probability:} The mask draw satisfies $\mathbb{E}[(\delta^{(t)})_k] = p_k^{(t)} \ge p_{\min} > 0$. Furthermore, the random draw of the current mask $\delta^{(t)}$ is conditionally independent of the stochastic noise in $u_t$ given the history $\mathcal{F}_t$.
\end{enumerate}

\begin{proof}
We start with the $L$-smoothness inequality:
\begin{equation*}
\mathcal{C}^{(t+1)} \le \mathcal{C}^{(t)} - \eta_t \langle \nabla\mathcal{C}^{(t)}, \delta^{(t)} \odot u_t \rangle + \frac{L\eta_t^2}{2} \|\delta^{(t)} \odot u_t\|^2
\end{equation*}

\textbf{1. Bounding the Descent Term:}
We analyze the expected inner product using the Law of Total Expectation. Due to the conditional independence of the mask draw $\delta^{(t)}$ and the update $u_t$:
\begin{align*}
\mathbb{E}[\langle \nabla\mathcal{C}^{(t)}, \delta^{(t)} \odot u_t \rangle] 
&= \mathbb{E}\left[ \sum_{k} (\nabla\mathcal{C}^{(t)})_k (u_t)_k \mathbb{E}[\delta_k^{(t)} \mid \mathcal{F}_t] \right] \\
&\ge p_{\min} \mathbb{E}[\langle \nabla\mathcal{C}^{(t)}, u_t \rangle]
\end{align*}
Applying the Descent Condition (Assumption 2):
\begin{equation}
\mathbb{E}[\langle \nabla\mathcal{C}^{(t)}, \delta^{(t)} \odot u_t \rangle] \ge p_{\min} c_1 \mathbb{E}[\|\nabla\mathcal{C}^{(t)}\|^2]
\end{equation}

\textbf{2. Bounding the Second-Order Term:}
Since $\delta_k^{(t)} \in \{0,1\}$, element-wise multiplication by the mask is a contraction in the Euclidean norm ($\|\delta^{(t)} \odot u_t\|^2 \le \|u_t\|^2$). Using Assumption 3:
\begin{equation}
\mathbb{E}[\|\delta^{(t)} \odot u_t\|^2] \le \mathbb{E}[\|u_t\|^2] \le K
\end{equation}

\textbf{3. Final Convergence Argument:}
Substituting these into the smoothness bound and taking full expectations:
\begin{equation*}
\mathbb{E}[\mathcal{C}^{(t+1)}] \le \mathbb{E}[\mathcal{C}^{(t)}] - \eta_t p_{\min} c_1 \mathbb{E}[\|\nabla\mathcal{C}^{(t)}\|^2] + \frac{L\eta_t^2}{2} K
\end{equation*}
Summing from $t=0$ to $T-1$:
\begin{equation*}
p_{\min} c_1 \sum_{t=0}^{T-1} \eta_t \mathbb{E}[\|\nabla\mathcal{C}^{(t)}\|^2] \le \mathcal{C}^{(0)} - \mathcal{C}^* + \frac{LK}{2} \sum_{t=0}^{T-1} \eta_t^2
\end{equation*}
Given standard step-size conditions ($\sum \eta_t = \infty, \sum \eta_t^2 < \infty$), taking the limit as $T \to \infty$ implies:
\begin{equation*}
\liminf_{t\to\infty} \mathbb{E}[\|\nabla\mathcal{C}^{(t)}\|^2] = 0
\end{equation*}
Thus, WSBD-$\mathcal{O}$ converges to a stationary point, provided the base optimizer $\mathcal{O}$ satisfies the descent and boundedness conditions.
\end{proof}
\section{IMPORTANCE SCORE METRICS AND HYPERPARAMETER TUNING}
\label{app:imp_scores}

\begin{figure*}[h]
    \centering
    \begin{subfigure}[b]{0.32\textwidth}
        \includegraphics[width=\linewidth]{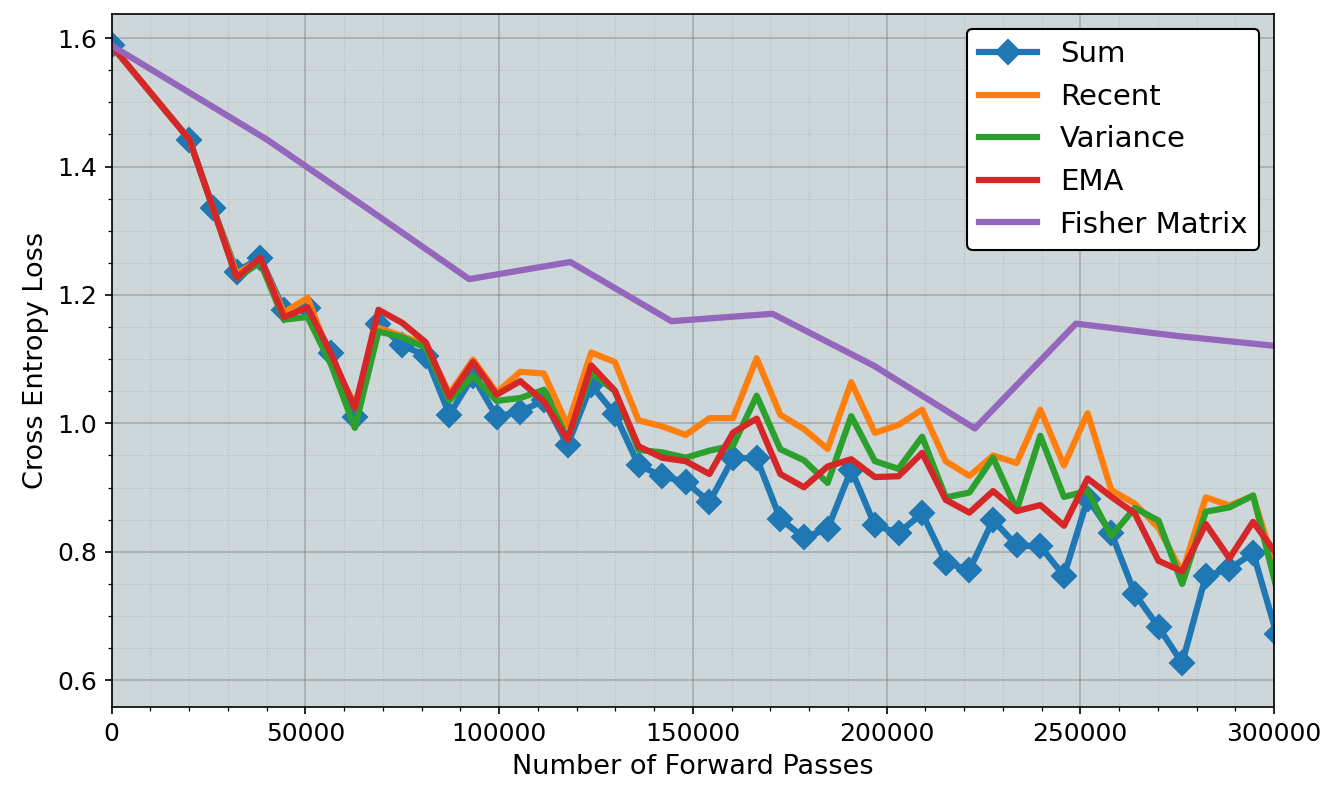}
        \caption{Importance Scores}
        \label{fig:imp_scores}
    \end{subfigure}% <--- NO BLANK LINE
    \hfill
    \begin{subfigure}[b]{0.32\textwidth}
        \includegraphics[width=\linewidth]{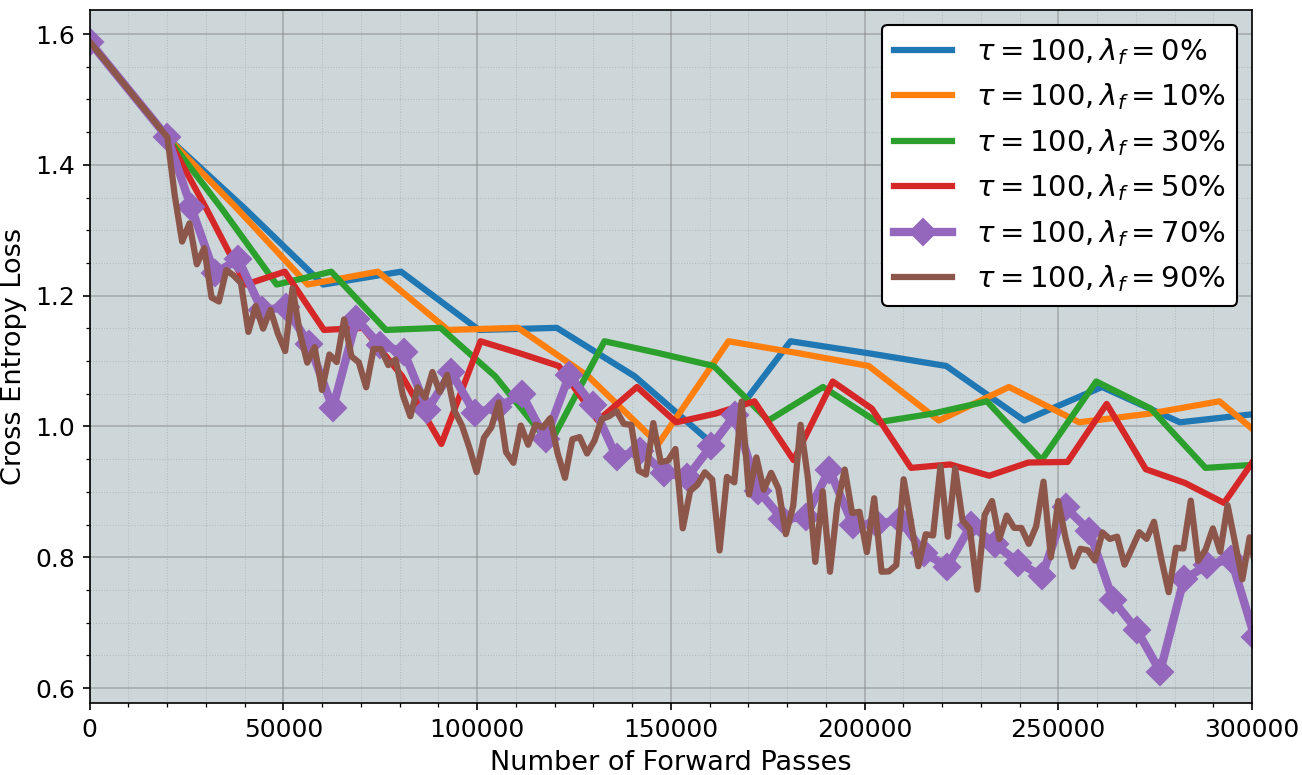}
        \caption{Freeze Thresholds ($\lambda_f$)}
        \label{fig:freeze_t}
    \end{subfigure}% <--- NO BLANK LINE
    \hfill
    \begin{subfigure}[b]{0.32\textwidth}
        \includegraphics[width=\linewidth]{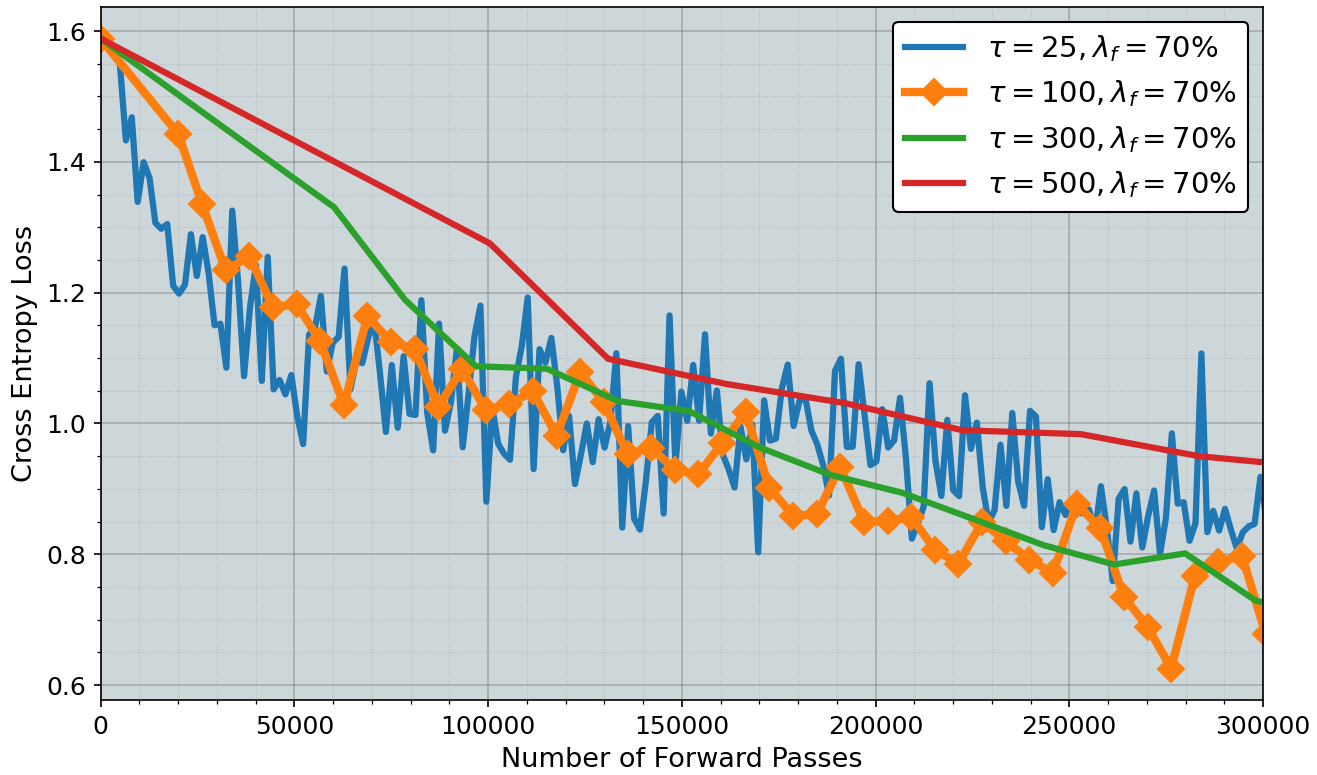}
        \caption{Window Sizes ($\tau$)}
        \label{fig:time_t}
    \end{subfigure}
    
    \vspace{-3mm} % Optional: Adjust vertical space before the main caption
    
    \caption{Hyperparameter tuning for WSBD. Each sub figure shows how the choice in importance score, freezing threshold and training window affects the optimization process. This was done for a 10 qubit, 5 layer QNN on the MNIST problem.}
    \label{fig:hyperparameter_tuning}
    
    \vspace{-5mm} % Optional: Adjust vertical space after the figure
\end{figure*}

To identify the most effective metric for our purposes, we investigated several candidates:

\textbf{Sum of Gradients (Sum):} This metric accumulates the absolute sum of gradients over the window. A large value suggests a consistent and strong impact on the cost function.
\begin{equation}
\mathcal{I}_p(\theta_k) = \Big|\sum_{t=1}^{\tau} \frac{\partial \mathcal{C}(\params^{(t)}, \hat{x_t})}{\partial \theta_k}\Big| + \epsilon
\end{equation}

\textbf{Most Recent Gradient (Recent):} This metric uses only the final gradient (most recent influence) in the window.
\begin{equation}
\mathcal{I}_p(\theta_k) = \Big|\frac{\partial \mathcal{C}(\params^{(\tau)}, \hat{x_\tau})}{\partial \theta_k}\Big| + \epsilon
\end{equation}

\textbf{Gradient Variance (Variance):} This metric gauges the consistency of a parameter's gradient throughout the training window. A low variance suggests a stable, though not necessarily large, gradient. It is computed using Welford's online algorithm \cite{welford1962note}, an efficient single-pass method. 
\begin{equation}
\mathcal{I}_p(\theta_k) = \Big|\text{Var}_{t\in [1, \tau]}\Big(\frac{\partial \mathcal{C}(\params^{(t)}, \hat{x_t})}{\partial \theta_k}\Big)\Big| + \epsilon
\end{equation}

\textbf{Exponential Moving Average (EMA):} This provides a smoothed average of gradients, giving more weight to recent updates. It requires an extra hyperparameter, the decay rate $\beta$. First, we define the exponential moving average of the gradient, $m_k^{(t)}$ or a specific parameter $\theta_k$ at training step t:
\begin{equation}
\mathcal{I}_p(\theta_k)^{(t)} = \beta \mathcal{I}_p(\theta_k)^{(t-1)} + (1-\beta)\frac{\partial \mathcal{C}(\params^{(t)}, \hat{x_t})}{\partial \theta_k}
\end{equation}
The final importance score is then given by:
\begin{equation}
\mathcal{I}_p(\theta_k)^{(t)} = |\mathcal{I}_p(\theta_k)^{(t)}| + \epsilon
\end{equation}

\textbf{Diagonal Fisher (Fisher Matrix):} This is approximated by the average of the squared gradients and is often used as a proxy for the second-order derivative information.
\begin{equation}
\mathcal{I}_p(\theta_k) = \frac{1}{\tau} \sum_{t=1}^\tau \Big(\frac{\partial \mathcal{C}(\params^{(t)}, \hat{x_t})}{\partial \theta_k}\Big)^2 + \epsilon
\end{equation}

Note that a small constant $\epsilon$ (e.g., $10^{-8}$) is added to each score ensuring $\forall \theta_k \in \params, \mathcal{I}_p(\theta_k) > 0$ which will be crucial for proving convergence for WSBD (Sec. \ref{sec:proof}). To select the optimal importance score for WSBD, we compared the five metrics. Each metric was used to train an identical 10-qubit, 5-layer QNN on the MNIST problem, with the performance comparison shown in Fig. \ref{fig:imp_scores}. We make multiple observations. %While the Fisher Matrix is a theoretically powerful metric, it proved less efficient due to the additional forward passes needed to compute the full matrix.  The choice of the importance score metric, \(\mathcal{I}_p\), is central to WSBD's intelligence, as it defines how parameter `influence' is quantified. While most of the metrics we tested outperformed standard optimizers, the \textbf{Sum of Gradients} proved to be the most robust and effective choice.

\begin{itemize}
    \item The \textbf{Fisher Matrix} metric, while theoretically powerful, showed poor performance as the diagonal elements is likely not enough second order information for the optimization process and parameters influence each other.
    \item The \textbf{Gradient Variance} metric was found to be ill-suited for QNNs due to the barren plateau problem. A parameter with a consistently near-zero gradient would have a low variance, but this is a sign of it being stuck on a plateau, not a reliable indicator of its true importance.
    \item The \textbf{Exponential Moving Average (EMA)} performed similarly to the Sum of Gradients but was less effective and required tuning an additional hyperparameter, adding unnecessary complexity.
    \item The \textbf{Sum of Gradients} metric provides a straightforward and highly effective measure of a parameter's impact. It correctly identifies parameters with strong, consistent gradients as influential while effectively flagging those on barren plateaus as unimportant.
\end{itemize}

This makes it the superior choice for navigating the challenges of QNN training. Future work could involve exploring other importance metrics, such as novel hardware-aware metrics that incorporate factors like device-specific error rates to make the optimizer more robust in noisy settings.

\subsection{Hyperparameter Tuning and its Effect on the Optimization Landscape}
\label{sec:appendix_hyerparam}

The WSBD optimizer has three primary hyperparameters: the importance score metric (\(\mathcal{I}_p\)), the training window size (\(\tau\)), and the freeze percentile (\(\lambda_f\)). While our proof of convergence (Appendix \ref{app:converence_proof}) guarantees that the optimizer will converge for any valid setting of these parameters (i.e., $\mathcal{I}_p(\theta)_k > 0 \text{ }\forall \theta_k\text{}, \text{ } \tau > 0, \lambda_f < 100\%$), their values directly influence the training dynamics and overall efficiency. This section explains the practical trade-offs involved with tuning each hyperparameter. We tune WSBD using grid search on two parameters, the freezing threshold $\lambda_f$ and the time window size $\tau$, using the MNIST task on a 10-qubit 5-layer model. First, we fix $\tau = 100$ and vary $\lambda_f$ across $\{0\%, 10\%, 30\%, 50\%, 70\%, 90\%\}$. Then we fix $\lambda_f = 70\%$ and vary $\tau$ across $\{25, 100, 300, 500\}$. The best performance is obtained with $\lambda_f = 70\%$ and $\tau = 100$. This supports the hypothesis that freezing a large fraction of less important parameters accelerates convergence and that shorter time windows help the optimizer adapt more effectively. We notice that these hyperparameter values are generally stable with good performance improvements for $\tau$ being between 100-500 and $\lambda_f$ being between 30-90\%. %See Appendix for a detailed comparison of importance metrics and hyperparameters \ref{app:imp_scores}.

\subsubsection{Training Window Size (\(\tau\))}
The training window, \(\tau\), dictates the number of training steps for which the active set of parameters remains fixed and being trained. This hyperparameter controls the balance between focused optimization and algorithmic adaptivity.

\begin{itemize}
    \item A \textbf{high \(\tau\) value} means the active set is trained for a longer duration. This allows the optimizer to thoroughly explore the subspace defined by the active parameters. However, if a suboptimal set is chosen, it can lead to stagnation before the algorithm has a chance to re-evaluate and select a more influential set of parameters. Additionally the importance of a parameter might stagnate before the window is complete. 
    \item A \textbf{low \(\tau\) value} results in frequent re-shuffling of the active and frozen sets. This enhances the algorithm's adaptivity, allowing it to quickly pivot its focus. However, excessively frequent updates may prevent the optimizer from training any single parameter long enough to accurately gauge its true influence, and the overhead of constantly re-calculating importance scores can diminish computational gains.
\end{itemize}

Our experiments found that values between \textbf{100-300} were optimal, providing a sweet spot that balances deep training of an active set with the flexibility to adapt to the changing optimization landscape.

\subsubsection{Freeze Percentile (\(\lambda_f\))}
The freeze percentile, \(\lambda_f\), determines the aggressiveness of the resource allocation strategy by setting the proportion of parameters to be frozen.

\begin{itemize}
    \item A \textbf{high \(\lambda_f\) value} results in a smaller active set. This offers the greatest computational savings, as fewer forward passes are required for gradient estimation at each step. However, an overly aggressive percentile risks freezing parameters that may become important later in training, potentially hindering the optimizer's ability to explore the full parameter space.
    \item A \textbf{low \(\lambda_f\) value} makes the optimizer behave more like a standard gradient-based method. While this ensures all parameters are explored more frequently, it diminishes the primary benefit of WSBD, as the number of forward passes remains high.
\end{itemize}

Our empirical results showed that a fairly aggressive freeze percentile of \textbf{70\% performed best}. This suggests that at any given stage of training, a relatively small subset of parameters is responsible for the most significant progress, validating WSBD's core hypothesis. Additionally as WSBD enables exploration through its reset mechanism and stochastic selection the high freezing percentile is well justified.S

\section{REPRODUCIBILITY}
\label{app:codeanddataappendix}
\textit{To ensure full reproducibility and encourage adoption and extension by the Quantum Machine Learning (QML) community, all source code, datasets, and analysis scripts used to generate the results presented in this study will be made publicly available on GitHub upon publication.}
\subsection{Software Environment and Dependencies}
\label{sec:appendix_software}

Our implementation is built upon Python (v3.12.5) and leverages several standard, open-source libraries for scientific computing and machine learning.

\begin{itemize}
    \item \textbf{Core QML Framework:} The Quantum Neural Networks (QNNs) were constructed, simulated, and trained using the \texttt{PennyLane} library (v 0.40.0) and training using non-gradient optimizers used \texttt{scikit\_optimize} (v 0.10.2).
    \item \textbf{Data Handling and Processing:} We utilized \texttt{NumPy} (v 2.3.2) for numerical operations and \texttt{Scikit-learn} (v 1.7.1) for sourcing and preprocessing the MNIST dataset. \texttt{Pandas} (v 2.3.1) was used for organizing experimental data.
    \item \textbf{Visualization:} All plots and figures were generated with \texttt{Matplotlib} (v 3.9.2). The training progress was monitored using \texttt{tqdm} (v 4.66.5).
\end{itemize}

A complete \texttt{requirements.txt} file will be included in the root of our public repository to allow for one-step replication of the software environment.

\subsection*{Hardware Configuration and Quantum Computer Used}
The experiments were executed on a high-performance computing server equipped with dual Intel(R) Xeon(R) Gold 6258R CPUs, 1.5 TB of RAM, and eight NVIDIA A100 (40GB) GPUs.

While this information is provided for completeness, our primary performance metric---the number of forward passes required for convergence---is a \textbf{hardware-independent} measure of algorithmic efficiency. The conclusions drawn from our results are therefore transferable across different computing infrastructures.

For the quantum hardware wall-clock experiment we utilized the IBM quantum platform \texttt{ibm\_kingston} quantum computer which has the \texttt{Heron r2} quantum processing unit. We ran 100 random circuits for the different QNN sizes with $\mathcal{M} = 1000$ shots. We used these error rates for the VQE training problem.

\subsection{Experimental Protocol and Control of Randomness}
\label{sec:appendix_randomness}

\textbf{Dataset Integrity:} To ensure a fair and rigorous comparison between all optimizers, we implemented strict controls over stochastic elements. The training and testing datasets for both the MNIST and Parity problems are static and will be provided in our repository. For every experiment, all optimizers were fed data points in the exact same sequence, eliminating any performance variations due to random data shuffling.

\textbf{Parameter Initialization:} For each QNN architecture, the initial variational parameters were randomly generated once and then saved. This fixed set of starting parameters was used for all optimizers being tested on that architecture, guaranteeing an identical starting point for every experimental run.

\textbf{Stochasticity in WSBD:} The only intentionally stochastic component in our experiments was the parameter selection mechanism within the WSBD, SBD, and L-WSBD optimizers, which is a core feature of their design. We found that despite this stochasticity, the performance and convergence behavior of WSBD remained highly stable and consistent across multiple independent runs ($\sim$5), validating the robustness of our approach.

\subsection*{Access and Usage}
The codebase is structured to enable straightforward replication of our findings. The code is organized into two primary directories corresponding to the experimental problems: one for MNIST and one for Parity. Within each directory, we provide a separate Python script for each optimizer (e.g., \texttt{WSBD-SGD.py}, \texttt{SGD.py}, \texttt{DBD.py}, etc.). To reproduce a specific result, the QNN architecture (the number of qubits and layers) can be configured directly within the relevant script. The experiment is then launched by executing that specific file. For instance, to run the 10-qubit, 5-layer Parity problem with WSBD-SGD, a user would modify the parameters inside the \texttt{WSBD-SGD.py} file in the \texttt{/XOR/} folder and then run the script. Detailed instructions for setup and execution will be available in the \texttt{README.md} file in the project's root directory.

\section{ADDITIONAL RESULTS}
This section includes detailed training plots and tables related to the comparison of optimizers. 

\begin{figure}[!h]
    \centering
    \begin{subfigure}{\linewidth}
        \centering
        \includegraphics[width=\linewidth]{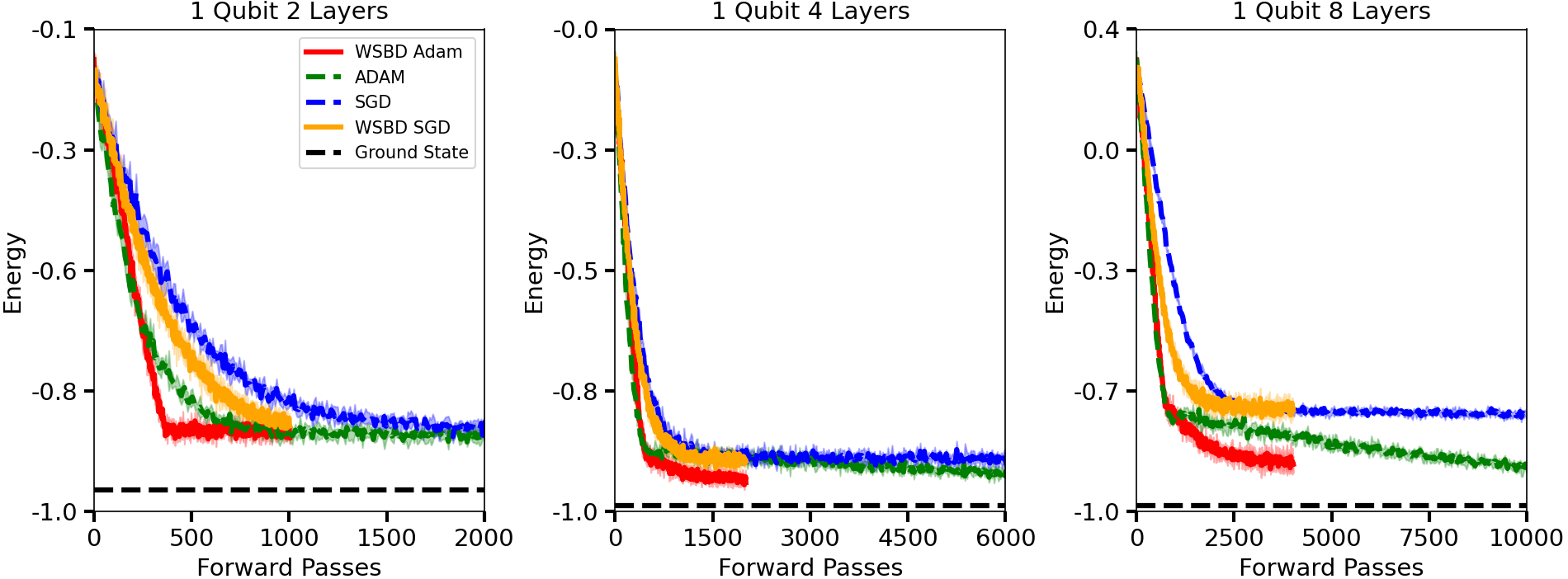}
        % \caption{Caption for 1q}
        \label{fig:vqe_1q}
    \end{subfigure}

    \vspace{1em} % optional space between subfigures

    \begin{subfigure}{\linewidth}
        \centering
        \includegraphics[width=\linewidth]{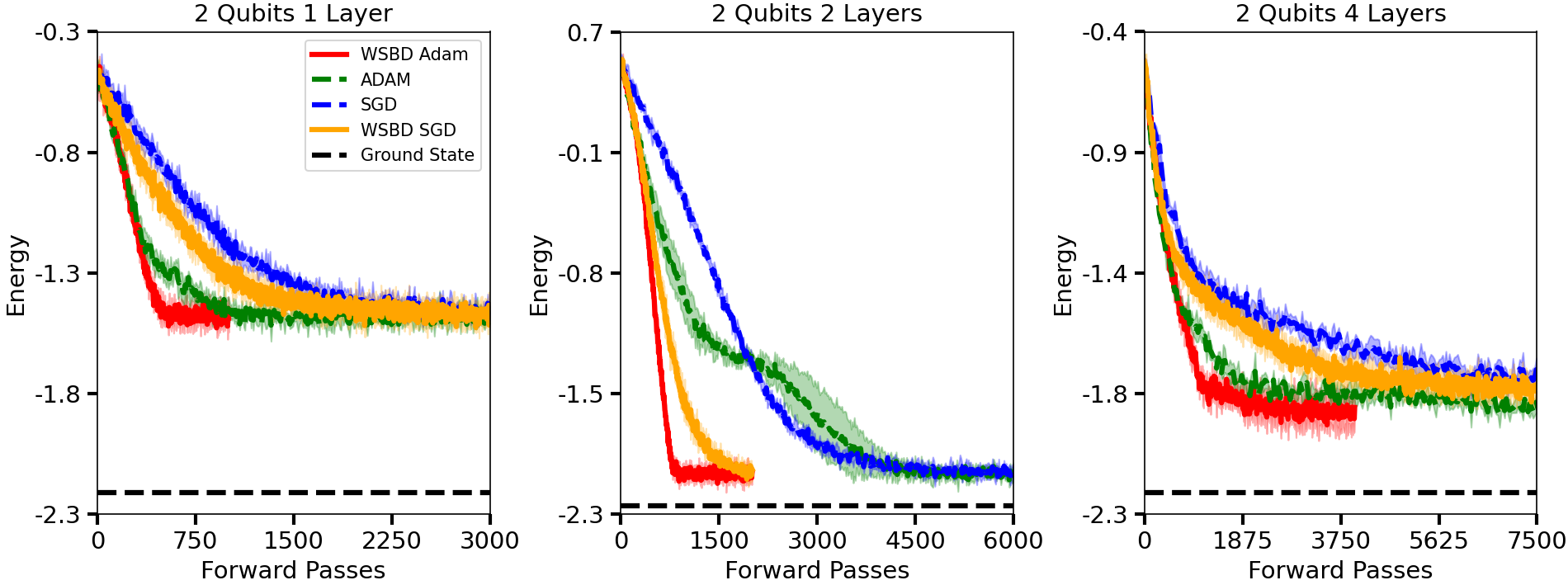}
        % \caption{Caption for 2q}
        \label{fig:vqe_2q}
    \end{subfigure}

    \vspace{1em}

    \begin{subfigure}{\linewidth}
        \centering
        \includegraphics[width=\linewidth]{Figures2/vqe_4q.png}
        % \caption{Caption for 4q}
        \label{fig:vqe_4q}
    \end{subfigure}

    \caption{Training curves for the ground state energy problem using Adam, SGD and their WSBD counterpart optimizers. The black dotted line represents the ground state energy each optimizer aims to reach (closer is better). The models were trained in realistic noisy environments until convergence was reached.}
    \label{fig:vqe_all}
\end{figure}
\begin{figure*}[h]
    \centering
    \begin{subfigure}[t]{0.32\textwidth}
        \centering
        \includegraphics[width=\textwidth]{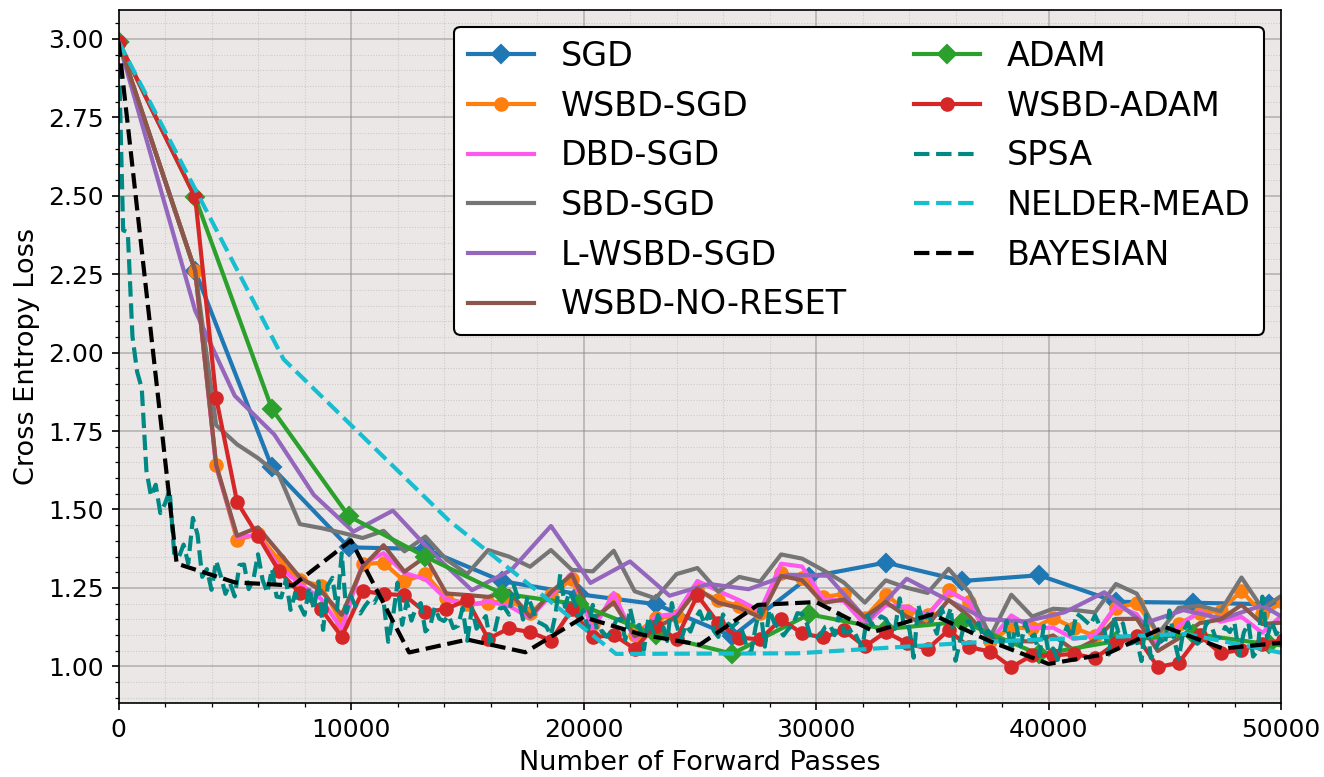}
        \caption{MNIST 4 Qubits 2 Layers}
        \label{fig:training_4q}
    \end{subfigure}
    \hfill
    \begin{subfigure}[t]{0.32\textwidth}
        \centering
        \includegraphics[width=\textwidth]{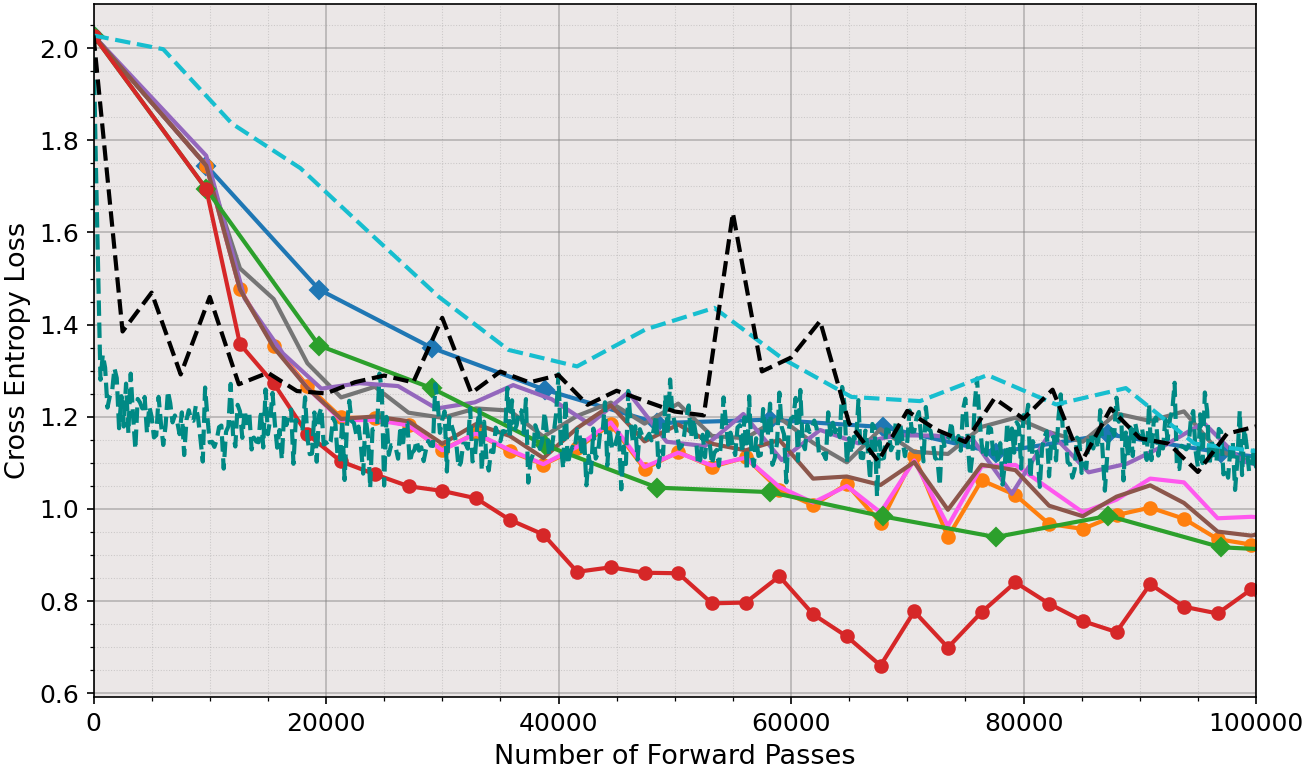}
        \caption{MNIST 8 Qubits 3 Layers}
        \label{fig:training_8q}
    \end{subfigure}
    \hfill
    \begin{subfigure}[t]{0.32\textwidth}
        \centering
        \includegraphics[width=\textwidth]{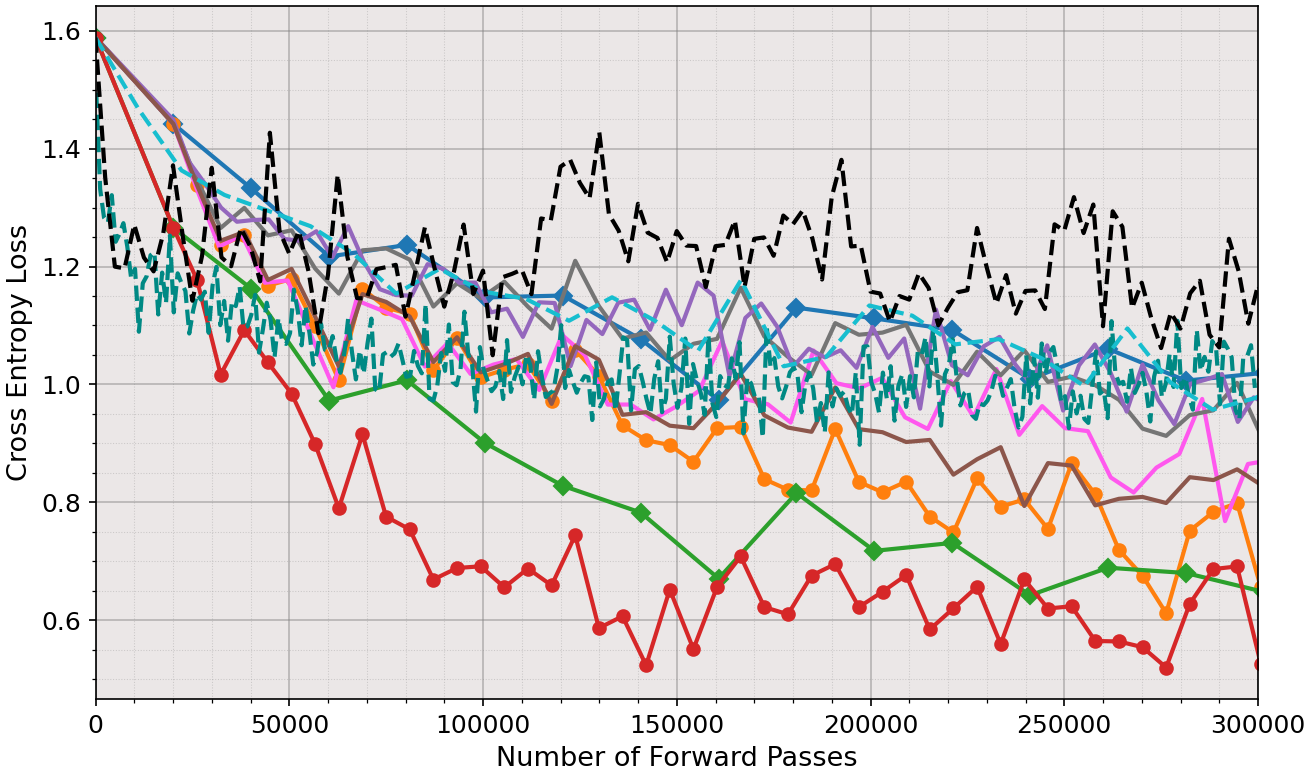}
        \caption{MNIST 10 Qubits 5 Layers}
        \label{fig:training_10q}
    \end{subfigure}
    
    \vspace{0.2cm}
    
    \begin{subfigure}[t]{0.32\textwidth}
        \centering
        \includegraphics[width=\textwidth]{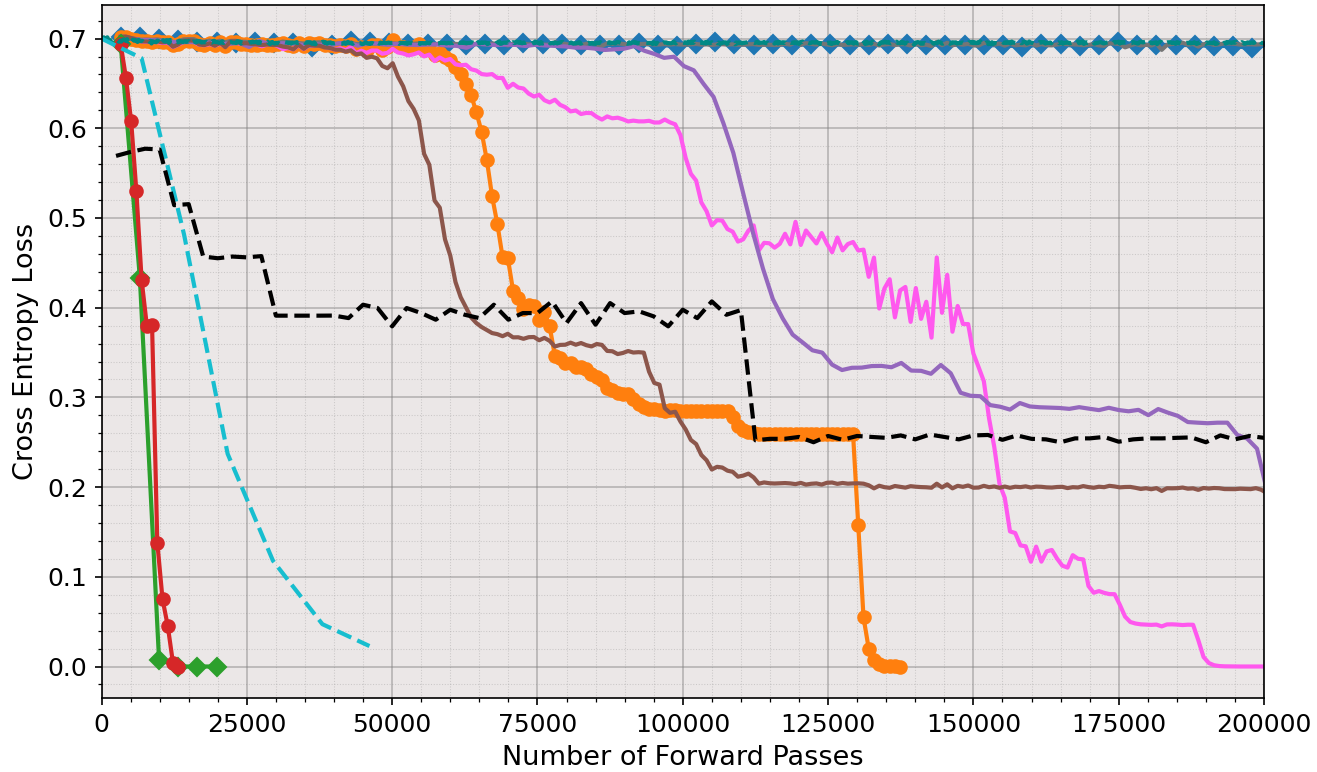}
        \caption{Parity 4 Qubits 2 Layers}
        \label{fig:xor_4q}
    \end{subfigure}
    \hfill
    \begin{subfigure}[t]{0.32\textwidth}
        \centering
        \includegraphics[width=\textwidth]{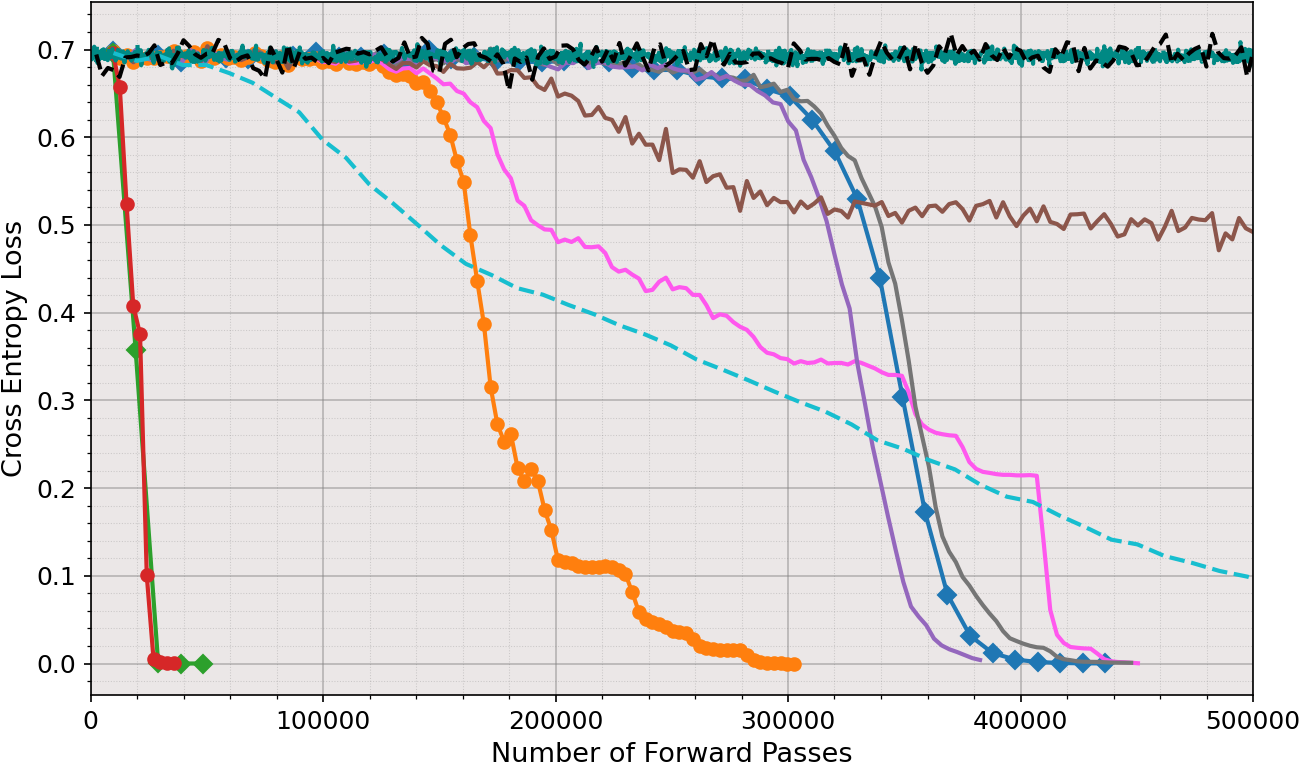}
        \caption{Parity 8 Qubits 3 Layers}
        \label{fig:xor_8q}
    \end{subfigure}
    \hfill
    \begin{subfigure}[t]{0.32\textwidth}
        \centering
        \includegraphics[width=\textwidth]{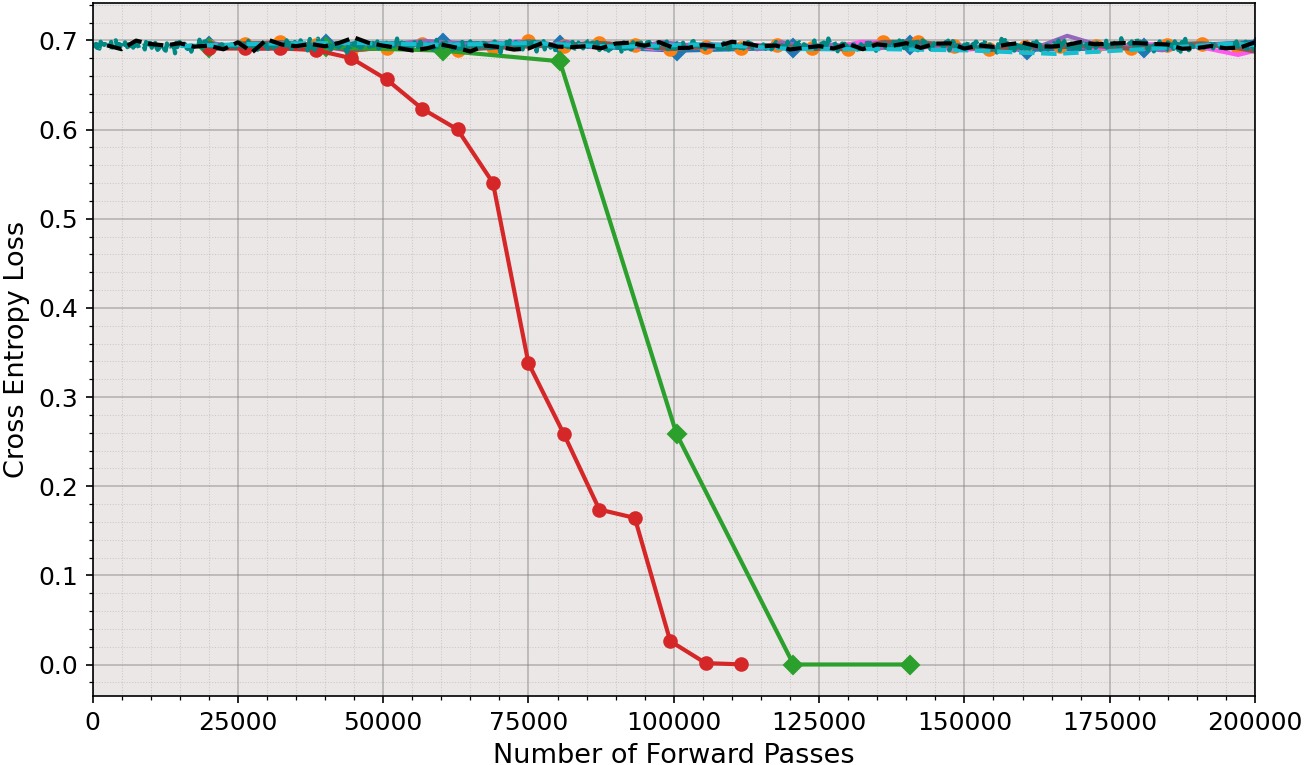}
        \caption{Parity 10 Qubits 5 Layers}
        \label{fig:xor_10q}
    \end{subfigure}

\caption{Comparisons of QNN training on optimizers. Top row: MNIST classification problem. Bottom row: Parity problem. We compare all optimizers summarized in Table \ref{tab:optimizers} on these two tasks. WSBD shows clear performance improvements having both a faster decrease in loss and often reaching a lower loss overall for many models.}
\label{fig:qnn_combined}
\end{figure*}

\begin{figure*}[!h]
    \begin{subfigure}[t]{1.0\textwidth}
        \centering
        \includegraphics[width=\textwidth]{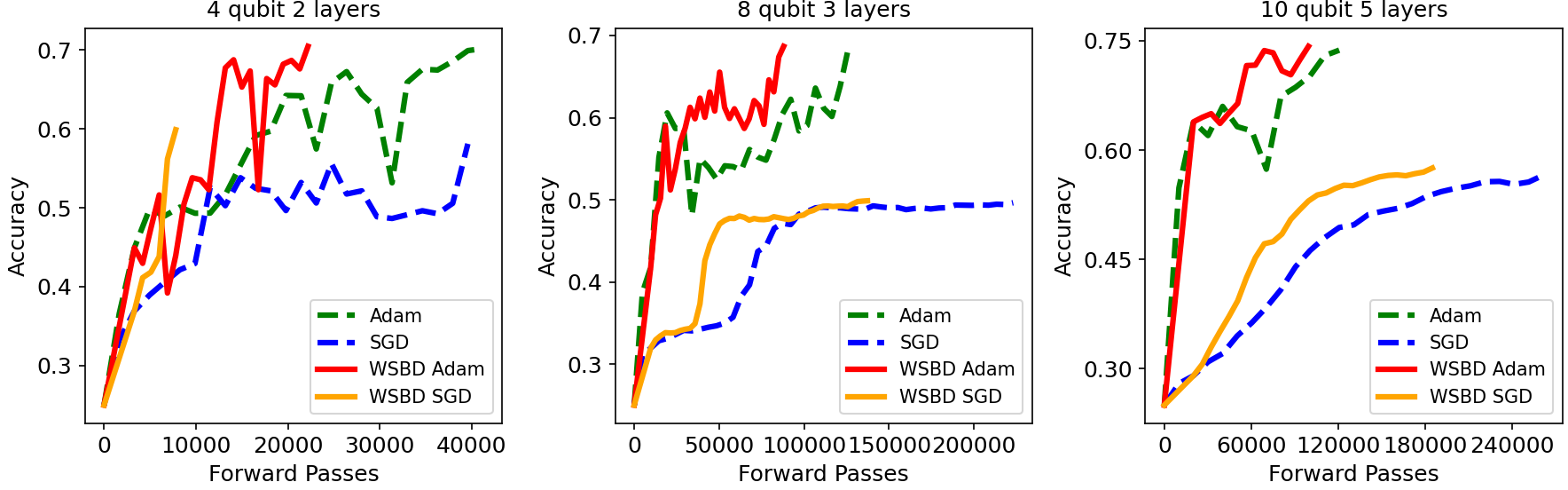}
        \caption{MNIST accuracy training curves with the SGD, Adam and WSBD optimizers. We show the training until the highest accuracy is reached using each optimizer.}
        \label{fig:app_MNIST_ACC_CURVES}
    \end{subfigure}

    \begin{subfigure}[t]{1.0\textwidth}
        \centering
        \includegraphics[width=\textwidth]{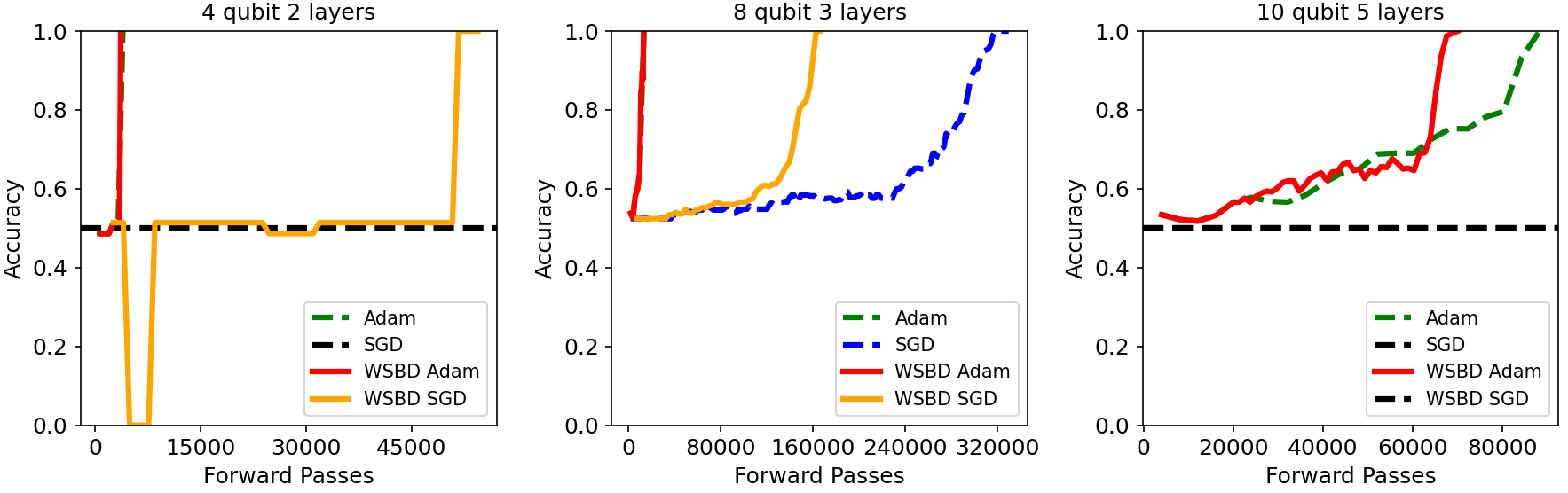}        
        \caption{Parity problem accuracy training curves with the SGD, Adam and WSBD optimizers. The black dotted line represents when a given optimizer wasn't able to solve or improve on the problem.}
        \label{fig:app_PARITY_ACC_CURVES}
    \end{subfigure}
    \caption{Training curves showing how accuracy increases for a 5000 image and 500 bit-string testing set using Adam, SGD and the WSBD optimizers. WSBD reaches its max accuracy faster in all curves and often reaches a higher accuracy overall.}
    \label{fig:combined_accuracy_training_curves}
    
\end{figure*}

\begin{table*}[t]
\centering
\caption{Forward passes to reach max accuracy and target energies for the MNIST, parity and noisy VQE problems. $\mathcal{M}$ represents the number of circuit evaluations each forward pass is ran. For the VQE problem $\mathcal{M} = 1000$. In all problems and QNN sizes, WSBD shows performance improvements reaching its target faster. For the parity problem 0 represents an optimizer not solving or improving on the task.}
\label{tab:vqe_to_taget_energy_DETAILED}
\begin{tabular}{ll|cc|cc}
\toprule
\multirow{2}{*}{\textbf{QNN}} & \multirow{2}{*}{\textbf{Max Accuracy (SGD/Adam}} &
\multicolumn{2}{c|}{\textbf{Standard Optimizers}} &
\multicolumn{2}{c}{\textbf{WSBD Optimizers}} \\
\cmidrule(lr){3-4} \cmidrule(lr){5-6}
\textbf{Size} & \textbf{WSBD-SGD/WSBD-Adam)}  & \textbf{SGD} & \textbf{ADAM} & \textbf{WSBD-SGD} & \textbf{WSBD-ADAM} \\
\midrule
\multicolumn{6}{c}{\textit{MNIST Classification}} \\
\midrule
4q, 2l & (58.1, 70.2 59.9, 70.4) & 39,600$\mathcal{M}$ & 41,250$\mathcal{M}$ & \textbf{7,800}$\mathcal{M}$ & \textbf{22,200}$\mathcal{M}$\\
8q, 3l & (49.6, 68.5 49.9, 68.7) & 223,100$\mathcal{M}$ & 126,100$\mathcal{M}$ & \textbf{137,300}$\mathcal{M}$& \textbf{88,000}$\mathcal{M}$\\
10q, 5l & (56.6, 73.7 57.5, 74.2) & 261,300$\mathcal{M}$ & 120,600$\mathcal{M}$ & \textbf{184,800}$\mathcal{M}$ & \textbf{99,400}$\mathcal{M}$\\
\midrule
\multicolumn{6}{c}{\textit{Parity Problem}} \\
\midrule
4q, 2l & (50.0, 100.0 100.0, 100.0) & 0 & 3,960$\mathcal{M}$ & \textbf{51,720}$\mathcal{M}$ & \textbf{3,660}$\mathcal{M}$\\
8q, 3l & (100.0, 100.0 100.0, 100.0) & 318,160$\mathcal{M}$ & 13,580$\mathcal{M}$ & \textbf{162,820}$\mathcal{M}$ & \textbf{13,180}$\mathcal{M}$\\
10q, 5l & (50.0, 100.0 50.0, 100.0) & 0 & 88,440$\mathcal{M}$ & 0 & \textbf{70,120}$\mathcal{M}$\\
\midrule

\multicolumn{6}{c}{\textit{Ground State Energy Problem}} \\

\midrule
& \textbf{Target Energy (SGD/Adam)}\\
1q, 2l  & $-0.85990 \,/\, -0.88768$ & 1,072$\mathcal{M}$ & 624$\mathcal{M}$  & \textbf{794}$\mathcal{M}$   & \textbf{350}$\mathcal{M}$ \\
1q, 4l  & $-0.90606 \,/\, -0.94134$ & 1,280$\mathcal{M}$ & 5,456$\mathcal{M}$ & \textbf{1,016}$\mathcal{M}$ & \textbf{920}$\mathcal{M}$ \\
1q, 8l  & $-0.71530 \,/\, -0.90632$ & 2,080$\mathcal{M}$ & 9,888$\mathcal{M}$ & \textbf{1,632}$\mathcal{M}$ & \textbf{3,064}$\mathcal{M}$ \\
2q, 1l  & $-1.48394 \,/\, -1.47540$ & 2,416$\mathcal{M}$ & 976$\mathcal{M}$  & \textbf{1,738}$\mathcal{M}$ & \textbf{436}$\mathcal{M}$ \\
2q, 2l  & $-2.00472 \,/\, -2.03302$ & 3,664$\mathcal{M}$ & 4,464$\mathcal{M}$ & \textbf{1,532}$\mathcal{M}$ & \textbf{768}$\mathcal{M}$ \\
2q, 4l  & $-1.81826 \,/\, -1.90246$ & 6,848$\mathcal{M}$ & 4,896$\mathcal{M}$ & \textbf{4,112}$\mathcal{M}$ & \textbf{1,688}$\mathcal{M}$ \\
4q, 1l  & $-3.88842 \,/\, -3.90748$ & 7,216$\mathcal{M}$ & 3,280$\mathcal{M}$ & \textbf{4,760}$\mathcal{M}$ & \textbf{1,316}$\mathcal{M}$ \\
4q, 2l  & $-4.12360 \,/\, -4.09478$ & 11,936$\mathcal{M}$ & 4,384$\mathcal{M}$ & \textbf{9,152}$\mathcal{M}$ & \textbf{2,664}$\mathcal{M}$ \\
4q, 3l  & $- 3.68399 \,/\, -3.75891$ & 13,440$\mathcal{M}$    & 8,496 $\mathcal{M}$    & \textbf{7,200}$\mathcal{M}$ & \textbf{1,994}$\mathcal{M}$ \\
\bottomrule
\end{tabular}
\end{table*}

%%%%%%%%%%%%%%%%%%%%%%%%%%% STD DATA %%%%%%%%%%%%%%%%%%%%%%%%%%%%%%

\begin{table}[h!]
\centering
\begin{tabular}{c|cc|cc}
\textbf{QNN Size} & \textbf{SGD} & \textbf{ADAM} & \textbf{WSBD SGD} & \textbf{WSBD ADAM} \\
\hline
1q 2l & 0.01389 ($\pm$1.39\%) & 0.01163 ($\pm$1.16\%) & 0.01945 ($\pm$1.95\%) & 0.01696 ($\pm$1.70\%) \\
1q 4l & 0.01154 ($\pm$1.15\%) & 0.01012 ($\pm$1.01\%) & 0.01465 ($\pm$1.46\%) & 0.01314 ($\pm$1.31\%) \\
1q 8l & 0.00956 ($\pm$0.96\%) & 0.01325 ($\pm$1.32\%) & 0.02179 ($\pm$2.18\%) & 0.02156 ($\pm$2.16\%) \\

2q 1l & 0.03574 ($\pm$1.60\%) & 0.03630 ($\pm$1.62\%) & 0.03796 ($\pm$1.70\%) & 0.03945 ($\pm$1.76\%) \\
2q 2l & 0.03533 ($\pm$1.58\%) & 0.07152 ($\pm$3.20\%) & 0.05569 ($\pm$2.49\%) & 0.04123 ($\pm$1.84\%) \\
2q 4l & 0.03664 ($\pm$1.64\%) & 0.03352 ($\pm$1.50\%) & 0.04100 ($\pm$1.83\%) & 0.05009 ($\pm$2.24\%) \\
4q 1l & 0.03172 ($\pm$0.67\%) & 0.08518 ($\pm$1.79\%) & 0.09618 ($\pm$2.02\%) & 0.10447 ($\pm$2.20\%) \\
4q 2l & 0.02535 ($\pm$0.53\%) & 0.05149 ($\pm$1.08\%) & 0.05418 ($\pm$1.14\%) & 0.07743 ($\pm$1.63\%) \\
4q 3l & 0.02587 ($\pm$0.54\%) & 0.04386 ($\pm$0.92\%) & 0.05690 ($\pm$1.20\%) & 0.09522 ($\pm$2.00\%) \\
\end{tabular}
\caption{Standard Deviation (STD) and Normalized STD (NSTD = $\frac{\text{STD}}{\text{Ground State Energy}}$) for SGD/ADAM and WSBD variants on the noisy VQE problem. As seen the weighted-stochastic freezing element of WSBD still remains highly stable, with a Normalized STD around 1-2\% typically.}
\end{table}

\begin{table}[h!]
\centering
\begin{tabular}{c|c|c}
\textbf{QNN Size} & \textbf{WSBD SGD (STD, CV)} & \textbf{WSBD ADAM (STD, CV)} \\  \hline
4q 2l  & 0.0228 ($\pm3.25 \cdot 10^{-4}$\%) & 0.0146 ($\pm2.08\cdot10^{-4}$)\% \\ 
8q 3l  & 0.0028 ($\pm7.01 \cdot 10^{-6}$)\% & 0.0132 ($\pm3.35 \cdot 10^{-5}$)\% \\ 
10q 5l & 0.0044 ($\pm8.82 \cdot 10^{-6}$)\% & 0.0076 ($\pm2.04 \cdot 10^{-5}$)\% \\ 

\end{tabular}
\caption{Standard Deviation (STD) and Coefficient of Variation (CV) for WSBD SGD and WSBD ADAM on the MNIST problem. As seen the weighted-stochastic freezing element of WSBD still remains highly stable, with a CV close to 0. Note: Both the parameters and data order was fixed for all runs. Therefore, this experiment tests the stability of the weighted-stochastic freezing aspect directly.}
\end{table}

\end{document}